\newenvironment{sproof}{%
  \proof}{\endproof}
\def\Re{{\mathbb{R}}}
\def\Nat{{\mathbb{N}}}
\def\histx{{\tilde{x}}}
\def\histX{{\tilde{X}}}
\def\histy{{\tilde{y}}}
\def\histY{{\tilde{Y}}}
\def\histu{{\tilde{u}}}
\def\H{{\mathcal{H}}}
\def\X{{\mathcal{X}}}
\def\is{{\iota}}  
\def\his{{\eta}}  
\def\imap{{\kappa}}
\def\fmap{{\phi}}
\def\fmaphist{{\phi_{hist}}}
\def\pihist{{\pi_{hist}}}
\def\V{{\mathcal{V}}}
\def\panic{{\xi}}
\def\atan2{\operatorname{atan2}}
\def\pow{{\rm pow}}
\def\ifs{{\cal I}} 
\def\ifsder{{\cal I}_{der}} 
\def\ifshist{{\cal I}_{hist}} 
\def\imapb{{\kappa_{task}}} 
\def\preimg{{^{-1}}}
\DeclareMathOperator*{\id}{id}
\renewcommand{\restriction}{\mathord{\upharpoonright}}
\newcommand{\cat}{{}^\frown}
\newcommand*{\T}{\mathcal{T}}
\acrodef{Ispace}[I-space]{\emph{information space}}
\acrodef{Istate}[I-state]{\emph{information state}}
\acrodef{Imap}[I-map]{\emph{information mapping}}
\acrodef{ITS}[ITS]{\emph{information transition system}}
\acrodef{ITSs}[ITSs]{\emph{information transition systems}}
\acrodef{DITS}[DITS]{\emph{deterministic information transition system}}
\acrodef{NITS}[NITS]{\emph{nondeterministic information transition system}}
\acrodef{POMDPs}[POMDPs]{\emph{partially observable Markov decision processes}}
\acrodef{PSRs}[PSRs]{\emph{predictive state representations}}
\acrodef{GNT}[GNT]{Gap Navigation Trees}
\begin{document}
\title{Minimally sufficient structures for information-feedback policies
}
%
%
\author{Basak Sakcak\and
Vadim K. Weinstein \and Kalle G. Timperi \and
Steven M. LaValle
\thanks{This work was supported by a European Research Council Advanced Grant (ERC AdG, ILLUSIVE: Foundations of Perception Engineering, 101020977), Academy of Finland (projects CHiMP 342556 and BANG! 363637).}
}
\authorrunning{B. Sakcak et al.}
%
\institute{Center for Ubiquitous Computing \\
Faculty of Information Technology and Electrical Engineering \\
University of Oulu, Finland \\
\email{\{firstname.lastname\}@oulu.fi}}
%
\maketitle              
\begin{abstract}
In this paper, we consider robotic tasks which require a desirable outcome to be achieved in the physical world that the robot is embedded in and interacting with. Accomplishing this objective requires designing a filter that maintains a useful representation of the physical world and a policy over the filter states.  
A filter is seen as the robot's perspective of the physical world based on limited sensing, memory, and computation and it is represented as a transition system over a space of information states. 
To this end, the interactions result from the coupling of an internal and an external system, a filter, and the physical world, respectively,
through a sensor mapping and an information-feedback policy. 
Within this setup, we look for sufficient structures, that is, sufficient internal systems and sensors, for accomplishing a given task. We establish necessary and sufficient conditions for these structures to satisfy for information-feedback policies that can be defined over the states of an internal system to exist.
We also show that under mild assumptions, minimal internal systems that can represent a particular plan/policy described over the action-observation histories exist and are unique. Finally, the results are applied to determine sufficient structures for distance-optimal navigation in a polygonal environment.
\keywords{Planning, Transition Systems, Information Spaces, Sensing Uncertainty, Theoretical Foundations.}
\end{abstract}
\section{Introduction}
Determining actions that would cause a robot to accomplish a desired task is a fundamental problem in robotics.
Given a well-defined task structure and a particular robot hardware, solving this problem typically requires designing a filter and a policy over the filter states. 
Therefore, whether filters and policies are designed by engineers, 
computed, or learned, we argue that these two structures should be analyzed and designed together. 

There is a significant difference between a pure inference task, which corresponds to keeping track of the physical world state or a certain aspect of it, and a planning or control task. In the latter case, we are only interested in distinguishing the action to take at a particular state and not the state of the physical world itself. 
Therefore, a filter that correctly predicts the outcomes of actions in terms of observations may not be meaningful if all we need is a way to distinguish which action to take.
Most work in the literature separates designing filters 
from respective policies. Typically, filters are designed to estimate the physical world state and policies are determined over this state space
(see for example, \cite{MajTed17},\cite{ZhuAlo19}). For problems in which the state is not fully observable, \ac{POMDPs} \cite{KaeLitCas98},\cite{RosPinPaqCha08} and \emph{belief spaces} \cite{VitTom11},\cite{AghChaAma14} have been considered for planning. 
Considering directly mapping each observation to an action, early work characterized action-based sensors which provide only the information that is necessary and sufficient, exactly what is needed for an action to be determined \cite{erdmann1995understanding}. 
In this case, states can be grouped through equivalence relations induced by an action-based sensor such that the same action is applied for any state within an equivalence class.
This assumes that at each instant of determining an action, relevant information can be extracted from the environment. Hence, it results in \emph{memoryless} or \emph{reactive} policies. 
However, if this is not the case or if the robot sensors are fixed,
then, the decisions need to be based on the history of previous actions and observations. At one end of the spectrum, assuming unlimited memory, these decisions can be made based on full histories. However, in general, this is computationally unfeasible. 
Therefore, relevant information needs to be extracted from the histories by a filter, allowing a feasible policy to be defined over its states.

In this work, we analyze the relationship between tasks that require determining actions and a particular filter together with a policy defined over its states. 
Many of the concepts will build upon our previous work
\cite{WeiSakLav22,SakWeiLav22,sakcak2023mathematical}, in which we introduced a general framework built from input-output relationships between two or more coupled dynamical systems. In the basic setting of a robot embedded in an environment, these two dynamical systems correspond to an \emph{internal system} (a centralized computational component) and an \emph{external system} (robot body and the environment). Given particular robot hardware, that is, fixing the robot sensors and actuators, input-output relations correspond to actions and sensor observations. 

The internal system is formally described as a transition system, named an {\ac{ITS}}, with a state space that is an \ac{Ispace}. I-spaces are introduced in \cite[Chapter 11]{Lav06} as a way of analyzing the information requirements of robotic tasks. These were inspired by games with hidden information \cite{basar1998gametheory}. The term \emph{information} is related to the von Neumann-Morgenstern notion of information and not to the later notion introduced by Shannon. 
We see an ITS as a filter and a policy is defined over its states. Derived I-spaces and quotient ITSs are obtained from action-observation histories using \emph{information mappings} (I-maps) that are many-to-one. A derived I-space constitutes the state space of a quotient (derived) ITS.
Within this framework, we analyze conditions that these derived ITSs should satisfy so that feasible, that is, task-accomplishing, policies can be described over their states. 
Planning and control tasks, termed \emph{active tasks}, were already considered in \cite{SakWeiLav22,sakcak2023mathematical}. The results there provided a scaffolding but lacked in establishing the necessary and sufficient conditions and a characterization of sufficient filter-policy pairs, which we do in this paper. In particular, we will consider two cases; fixing the sensor-mapping and analyzing a sufficient ITS, and fixing a particular class of policy and analyzing sufficient sensors for that class. 

There is a limited literature that studied the information requirements for active tasks. This corresponds to determining the weakest notion of sensing or filtering that is sufficient to accomplish a task. 
A notable early work showed, especially for manipulation, that one can achieve certain tasks even in the absence of sensory observations \cite{ErdMas88}. Considering specific problems in mobile robot navigation  \cite{BluKoz78},\cite{TovMurLav07} addressed minimal sensors and filters that are sufficient for navigation. 
In \cite{ZhaShe20}, the authors characterize all possible sensor abstractions that are sufficient to solve a planning problem. 
Closely related to our work, a language-theoretic formulation appears in \cite{SabGhaSheOka19}, in which, Procrustean-graphs (p-graphs) were proposed as an abstraction to reason about interactions between a robot and its environment. Following up from \cite{erdmann1995understanding}, \cite{mcfassel2023reactivity} analyzes conditions for the existence of action-based sensors encoding a particular plan. The authors also propose an algorithm that decomposes those plans for which no action-based sensor exists into subpieces for which one does exist.
  
We focus on sufficient internal systems that can result in task accomplishment once coupled to the external system. 
Therefore, our treatment can be seen as characterizing a sufficient (or minimal) plan, that is, establishing conditions that an ITS should satisfy so that a feasible policy can be defined over its states. 
This is different than characterizing a sufficient ITS for planning, in which case an ITS should allow a policy or a plan to be computed.
A recent work \cite{subramanian2022approximate} addresses the latter case by establishing sufficient conditions that an \ac{Imap} should satisfy for it to allow a dynamic programming formulation.

\section{Information Transition Systems}\label{sec:ITS}


\begin{figure}[t!]
    \centering
    \includegraphics[width=0.5\linewidth]{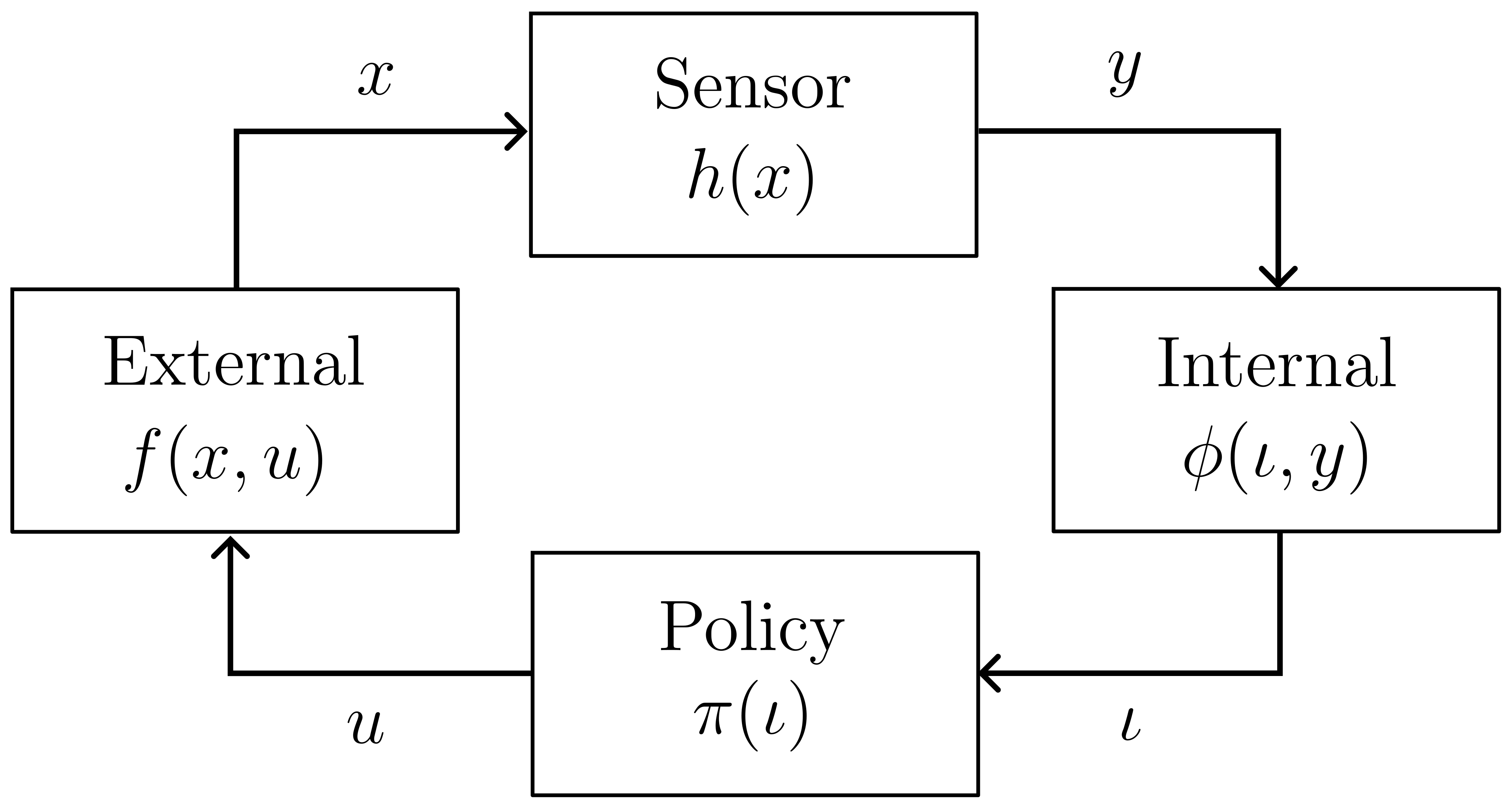}
    \caption{Internal $(\ifs, Y, \fmap)$ and external $(X, U, f)$ systems coupled through coupling functions $h$ and $\pi$, the sensor mapping and the information-feedback policy, respectively.}
    \label{fig:int-ext}
\end{figure}
\vspace{-1em}

We consider a robot embedded in an environment and model the interactions between a decision making entity and the physical world as coupled \emph{internal} and \emph{external} systems (see Figure~\ref{fig:int-ext}). 
The external system corresponds to the totality of the physical environment, including the robot body. 
The internal system 
represents
the perspective of a decision maker. The states of this system correspond to the retained information gathered through the outcomes of actions in terms of sensor observations. To this end, the basis of our mathematical formulation of the internal system is the notion of an \emph{\ac{Ispace}} introduced in \cite[Chapter 11]{Lav06}. Let $\ifs$ be an information space.
We will use the term \emph{\ac{Istate}} to refer to the elements of $\ifs$ and denote them by~$\is$. 

We model both the internal system and the external system as transition systems. Following the terminology introduced in \cite{SakWeiLav22}, an internal system will be referred to, more generally, as an \ac{ITS}. 

\vspace{-0.3em}
\begin{definition}[Information Transition System] An ITS is the quadruple
$S=(\ifs, \Lambda, \phi, \is_0)$, 
in which $\ifs$ is an information space corresponding to the states of the transition system, $\Lambda$ is the set of edge labels, 
$\phi \colon \ifs \times \Lambda \to \ifs$
is the information transition function, and $\is_0$ is the initial state. In particular, the edge labels come from one of two sets, namely $\Lambda = U \times Y$ and $\Lambda = Y$, in which $U$ and $Y$ are the sets of actions and observations, respectively. 
\label{def:ITS}
\end{definition} 
\vspace{-1em}


\begin{definition}[State-relabeled ITS] Given an ITS $S=(\ifs, \Lambda, \phi, \is_0)$ a state-relabeled ITS is the 6-tuple $S_\ell = (\ifs, \Lambda, \phi, \is_0, \ell, L)$, in which $\ell: \ifs \rightarrow L$ is a labeling function that attributes to each state $\is \in \ifs$ a unique label from set $L$. 
\end{definition}

The external system is modeled as a state-relabeled transition system as well, for which the set of labels is $Y$.
\begin{definition}[External System] An external system is a state-relabeled transition system $\X_h=(X, U, f, h, Y)$, in which $X$ is the state space, $U$ is the set of edge labels corresponding to the set of actions, $f : X \times U \rightarrow X$ is the (external) state transition function, and $h: X \rightarrow Y$ is a labeling function corresponding to the sensor mapping. 
\end{definition}

In our framework, a labeling function defined over the states of an ITS has two purposes: (i) deriving quotient ITSs and (ii) acting as a coupling function that maps the output of an ITS to the input of an external system (see Figure~\ref{fig:int-ext}).
The latter case corresponds to the policy $\pi : \ifs \rightarrow U$. 

We now focus on the former case of quotient systems derived from a state-relabeled ITS. Let $\ifs_{der}$ be a derived I-space and let $\imap : \ifs \rightarrow \ifsder$ be an \ac{Imap} that is a labeling function defined over the states of an ITS $S=(\ifs, \Lambda, \fmap)$.
Preimages of $\imap$ introduce a partitioning of $\ifs$ creating equivalence classes. 
Let $\ifs/\imap$ be the equivalence classes $[\is]_\imap $ induced by $\imap$ such that $\ifs/\imap=\{[\is]_\imap \mid \is\in \ifs\}$ and $[\is]_\imap=\{\is'\in \ifs \mid \imap(\is')=\imap(\is)\}$. Then, 
through these equivalence classes,
we can define a new ITS, called the \emph{quotient of $S$ by $\imap$}, denoted by $S/\imap$,
such that
$S/\imap = (\ifs/\imap, \Lambda, \phi/\imap)$, in which\footnote{Here, the map $\phi : \ifs \times \Lambda \to \ifs$ is treated as a subset of $\ifs \times \Lambda \times \ifs$.}
\begin{equation*}
    \phi/\imap :=\big\{\big([\is]_\imap, \lambda, [\is']_\imap\big) \mid (\is, \lambda, \is') \in \phi \big\}.
\end{equation*}
An important notion when obtaining quotient ITSs through labeling functions is \emph{sufficiency}. 

\begin{definition}[Sufficiency]
A labeling function $\imap \colon \ifs \rightarrow \ifsder$ defined over the states of a transition system $(\ifs, \Lambda, \phi, \is_0)$ is called \emph{sufficient}, if for all $s,t,s',t' \in \ifs$ and all $\lambda \in \Lambda$, the following implication holds:
\begin{equation*}
    \imap(s)=\imap(t) \land s'=\phi(s, \lambda) \land t'=\phi(t, \lambda) \implies 
    \imap(s')=\imap(t').  
\end{equation*}
\end{definition}
In \cite{WeiSakLav22,sakcak2023mathematical}, it was shown that the quotient of an ITS is also an ITS in the sense of Definition~\ref{def:ITS} if and only if $\imap$ is sufficient. Given the label of the current state and the edge label, the label of the next state can be uniquely determined if the labeling function is sufficient, ensuring that the state transitions of the quotient system are deterministic. Hence, $\phi/\imap$ is a function. 

Given a labeling function $\imap$, we might be interested in a finer labeling function which distinguishes the states distinguished by $\imap$ but at a higher resolution. This is achieved by the notion of a refinement of $\imap$ that is defined in the following.
\begin{definition}[\textbf{Refinement of an \ac{Imap}}]
An \ac{Imap} $\imap'$ is a \emph{refinement} of $\imap$, denoted by $\imap' \succeq \imap$, if for all 
$A \in \ifs/ \imap'$ there exists $B \in \ifs / \imap$ such that $A \subseteq B$.
\end{definition}

The history ITS is a special type of ITS  from which others will be derived through sufficient I-maps. 
Let $(A)^{<\Nat}$ denote the set of all finite-length sequences of elements of $A$.
The elements of the history information space, denoted by $\ifshist$, are finite sequences of alternating actions and observations which build upon the initial state $\his_0=() \in \ifshist$, therefore, $\ifshist = (U \times Y)^{<\Nat}$. 

\begin{definition}[History ITS] The history ITS $S_{hist}=(\ifshist, U \times Y, \fmaphist, \his_0)$ is an ITS with state space $\ifshist$ and the information transition function $\fmaphist$ 
is defined starting from $\his_0 = ()$ through the concatenation operation, that is,
\vspace{-0.3em}
\begin{equation*}
  \his_k = \his_{k-1} \cat (u_{k-1}, y_k).
\end{equation*}
\end{definition}


External and internal systems can be coupled through the coupling functions that map the input of one to the output of the other and vice versa. For us, the sensor mapping $h : X \rightarrow Y$ and the policy $\pi : \ifs \rightarrow U$ are two coupling functions. 
The coupled system of internal and external described this way is an autonomous system (a closed system), meaning that given an
initial state $(\is_0, x_1) \in \ifs \times X$ there exists a unique trajectory. We denote the function $(\is,x)\mapsto (\is',x')$ by
$\phi*_{\pi,h}f$. 
Then, the coupled system, denoted by $S_\pi \star \X_h$, is the pair $$S_\pi \star \X_h=(\ifs\times X,\,\phi*_{\pi,h}f).$$
It is also possible to have a single coupling function, in which case the coupled internal-external system admits an input. 
If no policy is determined over the internal system, the coupled system is determined by 
$$S \star \X_h = (\ifs\times X, U, \phi*_{h}f),$$
in which $\phi*_{h}f : \ifs\times X \times U \rightarrow \ifs \times X$ is the state transition function and $U$ is the set of inputs to the coupled system. 
This can be seen as the \emph{planner perspective} which allows to evaluate the outcomes of actions at a given I-state. 

Consider the coupling of a history ITS $(\ifshist, U \times Y, \phi_{hist})$ with an external system $(X, U, f)$ through the coupling function $h$, that is, $(\ifshist\times X, U, \phi_{hist}*_{h}f).$
Let $U^{<\Nat}$ be the set of all finite-length action sequences representing the set of all possible input sequences to the coupled system.
A state $(\his, x) \in \ifshist \times X$ of the coupled system is \emph{reachable} from $( (), x_1)$ if there exists some $\histu \in U^{<\Nat}$ such that the state of the coupled system becomes $(\his, x)$ when $\histu$ is applied starting from an initial state $( (), x_1) \in \ifshist \times X$.

\begin{definition}[Set of attainable histories] \label{Def_Attainable_histories}Given a coupling of a history ITS with an
external system $\mathcal{X}_h=(X, f, U, h, Y)$, a history $\his \in \ifshist$ is called \emph{attainable} if there exist $x, x_1 \in X$ such that the state $(\his,x)$ is reachable from an initial state $((),x_1)$. 
We denote by $\ifshist^{\mathcal{X}_h}$ 
the set of attainable histories.
\label{def:attainable_hist}
\end{definition}
The coupling with an external system $\X_h = (X, f, U, h, Y)$ induces a labeling function $\imap_{att}: \ifshist \rightarrow \{0,1\}$ over the histories through $\imap_{att}\preimg(1)=\ifshist^{\X_h}$. If a history I-state $\his_K$ up to some stage $K$ is unattainable, then any history I-state $\his_N$ up to some stage $N > K$ that builds upon $\his_K$ will also be unattainable. This is stated in the following lemma, which follows directly from Definition~\ref{Def_Attainable_histories}.

\begin{lemma}\label{lem:not_attainable} For any $\his \in \imap_{att}\preimg(0)$, and any $(u,y) \in U\times Y$, the next history I-state satisfies 
$\fmaphist(\his, (u,y)) \in \imap_{att}\preimg(0)$.
\end{lemma}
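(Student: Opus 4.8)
The plan is to prove the statement by contraposition: I would assume the successor history $\his' := \fmaphist(\his,(u,y))$ is attainable and derive that $\his$ itself is attainable, contradicting $\his \in \imap_{att}\preimg(0)$. First I would unpack attainability of $\his'$ via Definition~\ref{def:attainable_hist}: there exist $x',x_1' \in X$ and a finite action sequence $\histu' \in U^{<\Nat}$ such that applying $\histu'$ to the coupled system $(\ifshist \times X, U, \fmaphist *_h f)$ from the initial state $((),x_1')$ reaches $(\his',x')$.

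The key observation is that, along any trajectory of this coupled system, the history component grows by concatenation: a single action $\bar u$ sends a state $(\bar\his,\bar x)$ to $(\bar\his \cat (\bar u, h(f(\bar x,\bar u))),\, f(\bar x,\bar u))$, by the definition of $\fmaphist$. Hence the length of the history component always equals the number of actions applied so far, which forces $|\histu'| = |\his'| = |\his| + 1$; I would then write $\histu' = \histu \cat (u_{\mathrm{last}})$ with $|\histu| = |\his|$. Because the coupled system is autonomous — a unique trajectory issues from each initial state — the run generated by $\histu'$ extends the run generated by its prefix $\histu$. Reading off the state reached after applying only $\histu$ from $((),x_1')$, its history component is the length-$|\his|$ prefix of $\his'$, which by construction of $\his'$ is exactly $\his$; call the corresponding external state $\bar x$.

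Then $(\his,\bar x)$ is reachable from $((),x_1')$, so $\his \in \ifshist^{\X_h} = \imap_{att}\preimg(1)$, contradicting the hypothesis. This yields $\fmaphist(\his,(u,y)) \in \imap_{att}\preimg(0)$, as claimed. The only step needing any care — and the reason the paper calls the result immediate — is the bookkeeping identifying the history component after $k$ steps with a fixed prefix of the eventual history, i.e. that truncating the witnessing action sequence truncates the reached history correspondingly; I expect this to be the one spot where one must be explicit about the indexing conventions of $\fmaphist *_h f$, with everything else following directly from the definitions.
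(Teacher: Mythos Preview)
Your proof is correct and is exactly the argument the paper has in mind: the paper does not give a proof but simply remarks that the lemma ``follows directly from Definition~\ref{Def_Attainable_histories},'' i.e., that prefixes of attainable histories are attainable, which is precisely your contrapositive. You have merely made explicit the bookkeeping the paper leaves implicit.
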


\section{Sufficient Structures for Solving Active Tasks}\label{sec:suff_structures}
We focus on solving \emph{active tasks} which entail executing an information-feedback policy that forces a desirable outcome in the external system.
In \cite{sakcak2023mathematical}, we showed that given a feasible policy defined over the history I-space, minimal ITSs exist that can support that policy.
In this section, we will expand on this idea and formally define what supporting a policy means. 
Since our formulation treats ITSs in conjunction with respective policies, we will consider two cases:
\begin{itemize}
    \item Fixing the sensor mapping $h$, which corresponds to fixing $S_{hist}$, and characterizing the sufficient ITSs that support a particular feasible or optimal policy defined over $\ifshist$.
    \item Fixing the particular ITS, $S=(\ifs, Y, \phi)$ and characterizing the sensors $h: X \rightarrow Y$ that are sufficient for this ITS to support a particular class of feasible policies, that is, mappings from $\ifs$ to $U$. In particular, we will consider \emph{reactive policies}, that is, policies that map observations to actions. Hence, the class of ITSs is simply $S=(Y, Y, \id_Y)$, in which $\id_Y$ is the identity function and the policies are of the form $\pi : Y \rightarrow U$. 
\end{itemize}

\subsection{Task Description}\label{sec:task_description}

A task description is encoded through a \emph{task-induced labeling} function $\imapb : \ifshist \rightarrow \{0,1\}$, meaning that $\imapb^{-1}(1)$ is the set of histories that are task accomplishing. A task-induced labeling can be given, learned, or specified through a logical language over $\ifshist$. Tasks can also be described also over $X$ in a similar way. However, in this case the task-induced labeling needs to be determined by a map that checks whether a history I-state satisfies the task description (see \cite{sakcak2023mathematical} for a discussion on the ways of determining $\imapb$).
For example, when tasks are specified using a logical language over $\ifshist$, the resulting sentences of the language involve combinations of predicates that assign truth values to subsets of $\ifshist$. 
This implicitly defines $\imapb : \ifshist \rightarrow \{0,1\}$, in which $0$ stands for \texttt{false} and $1$ stands for \texttt{true}.
On the other hand, when a task description is determined as a logical language over $X$, the resulting sentences of the language involve combinations of predicates that assign truth values to subsets of $X$ (see \cite{FaiGirKrePap09,belta2007symbolic} for examples using linear temporal logic). 

Given a sequence of actions $\histu=(u_1, u_2, \dots, u_{N})$ and an initial state $x_1$, an external state trajectory $x_1 \diamond \histu$ is a sequence of external system states defined as
\begin{equation*}
   x_1 \diamond \histu = (x_1, x_2=f(x_1,u_1), \dots, x_{N+1}=f(x_{N},u_{N})).
\end{equation*}
Under a sensor-mapping $h$, an external system trajectory $x_1\, \diamond\, \histu$ corresponds to a unique action-observation history, that is 
\begin{equation*}
(h(x_1), u_1, h(x_2), \dots, u_N, h(x_{N+1})).
\end{equation*}
Note that the inverse is not necessarily true, since the sensor mapping $h : X \rightarrow Y$ is not necessarily invertible (it is not one-to-one). 
Thus, the same action-observation history can lead to different external system trajectories depending on the particular initial state $x_1 \in X$. 
In this case, it may not be possible to determine whether a given history satisfies the task description, as this depends on the particular sensing and actuation setting in relation to the task description given over $X$. 
In this work, when considering tasks defined over $X$, we will define the respective $\imapb$ in the following way. Let $g : \ifshist \rightarrow \pow(\tilde{X})$ be a function that maps a history to the corresponding set of possible external system trajectories. 
Then, whether a history $\his$ satisfies a task description given over $X$ is determined based on whether all $\histx \in g(\his)$ satisfy the task description. 

Accomplishing an active task requires that the resulting history $\his$ belongs to $\imapb\preimg(1)$. 
Let $S_\pi = (\ifs, Y, \fmap, \pi, U)$ and $\X_h=(X, U, f, h, Y)$ be a policy-labeled ITS and an external system, respectively. The corresponding coupled system is $S_\pi \star \X_h = (\ifs \times X, \phi \star_{\pi,h} f)$. 
We consider tasks that are defined over finite-length histories for which the satisfaction of sentences can be determined in finite time 
\footnote{For a discussion on infinitary tasks and how they can be transcribed as tasks over finite-length histories see Section~4.1 in \cite{sakcak2023mathematical}.}. 
We will consider tasks that have a termination condition so that once a sentence becomes true, the interaction $S_\pi \star \X_h$ stops resulting in the history $\his_N \in \imapb\preimg(1)$ for some $N$. 

\begin{definition}[Feasible Policy]\label{def:feasible_policy} A policy $\pi: \ifs \rightarrow U$ defined over the states of $S=(\ifs, Y, \fmap)$ is \emph{feasible} if for all $x_1 \in X$ the coupled system $S_\pi \star \X_h$ initialized at $(\is_0, x_1)$ results in $\his_N \in \imapb\preimg(1)$, in which $N$ may depend on $(\is_0, x_1)$.
\end{definition}

In this definition, we have assumed for simplicity that task accomplishment can be achieved for any initial external state $x_1 \in X$. This may not be true in general. In case it is not, a feasible policy should be defined for all $x_1 \in X' \subseteq X$, in which $X'$ is the set of states for which there exists a task-accomplishing history.


\subsection{ITSs Sufficient for Feasible Policies}\label{sec:suff_ITSs}

In this section, we derive conditions that a state-relabeled ITS $(\ifs, Y, \phi, \is_0)$ needs to satisfy in order to support 
a particular feasible policy $\pi_{hist}:\ifshist \rightarrow U$.
To achieve this objective, we first define the restriction of a history ITS by a $\pi_{hist}$ which drops the dependency on actions in transitions. 

Consider the history ITS $(\ifshist, U\times Y, \phi_{hist}, ())$ coupled to the external system $\X_h=(X, f, U, h, Y)$. This leads to the set of attainable histories $\ifshist^{\X_h}$ and the corresponding labeling function $\imap_{att}$, see Definition~\ref{Def_Attainable_histories}.
Then, any policy $\pi_{hist} : \ifshist^{\X_h} \rightarrow U$ 
further restricts the set of all histories to the subset that can be realized following $\pi_{hist}$.
Due to $\pi_{hist}$, at each history I-state $\his$, only a single action is possible.
We will call this the \emph{restriction of $\ifshist^{\X_h}$ by $\pi_{hist}$}, denoted by $\ifshist^{\X_h}\restriction\pi_{hist}$, that is,
\begin{equation}\label{eqn:policy_restrict}
\ifshist^{\X_h}\restriction\pi_{hist} := \big\{  \his \cat (u,y) \in \ifshist^{\X_h} \mid (\his, y) \in \ifshist^{\X_h}\times Y \land u = \pi_{hist}(\his)  \big\}.
\end{equation}
Following $\pi_{hist}$ restricts also the transition function $\fmaphist$. 
Let 
\begin{equation} \label{Eq_phi_hist_prime}
    \fmaphist':=\big\{ (\his, (u,y) , \his') \in \fmaphist \mid \his' \not\in \ifshist^{\X_h}\restriction\pi_{hist} \big\}
\end{equation}
be the set of transitions that cannot be realized following $\pi_{hist}$.\footnote{For notational convenience, we treat $\fmaphist$ in~\eqref{Eq_phi_hist_prime} as a subset of $\ifshist \times (U \times Y) \times \ifshist$.} Then, the set of transitions achievable under $\pi_{hist}$ is simply the set difference 
\begin{equation}\label{eqn:rest_trans_set}
    \fmaphist\restriction_{\pi_{hist}} := \fmaphist \setminus \fmaphist'.
\end{equation}
Notice that $\fmaphist\restriction_{\pi_{hist}}$ describes a function with domain $\ifshist^{\X_h}\restriction\pi_{hist} \cup \imap_{att}\preimg(0)$. 

Let $U^\panic := U \cup \{\panic\}$, in which $\xi$ serves as a dummy label indicating that an I-state is not attainable. We encode this by defining a labeling function $\imap_\pi: \ifshist^{\X_h}\restriction\pi_{hist} \cup \imap_{att}\preimg(0) \rightarrow U^\panic$ 
via
\begin{equation}\label{eqn:k_pi}
    \imap_\pi(\eta) := 
    \begin{cases}
        \pi_{hist}(\his)& \text{if } \imap_{att}(\his)=1\\
        \panic &\text{otherwise.}
    \end{cases}
\end{equation}
Note that $\imap_\pi$ encodes both the labeling $\imap_{att}$, which distinguishes unattainable histories, and the policy $\pi_{hist}$, which distinguishes histories in terms of actions to take.

\begin{lemma}\label{lem:u_for_y}Let $(\his, (u,y), \his'), ( \his, (u', y), \his'' )  \in \fmaphist\restriction_{\pi_{hist}}$. Then $\imap_{\pi}(\his')=\imap_{\pi}(\his'')$.
\end{lemma}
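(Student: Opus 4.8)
The plan is to argue by cases on whether the common source history $\his$ is attainable, that is, on the value of $\imap_{att}(\his)$. The point is that a transition of $\fmaphist\restriction_{\pi_{hist}}$ issuing from an attainable history is forced to carry the policy-prescribed action, so two such transitions sharing the source and the observation must in fact coincide; whereas a transition issuing from an unattainable history lands in an unattainable history, and all such histories are collapsed by $\imap_\pi$ onto the single label $\panic$.

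First suppose $\imap_{att}(\his)=1$. Since $(\his,(u,y),\his')\in\fmaphist\restriction_{\pi_{hist}}$, by \eqref{eqn:rest_trans_set} and \eqref{Eq_phi_hist_prime} we have $\his'=\his\cat(u,y)\in\ifshist^{\X_h}\restriction\pi_{hist}$, and reading off the definition \eqref{eqn:policy_restrict} of the policy restriction (using that the decomposition of a nonempty history into a prefix and a last action--observation pair is unique) gives $u=\pi_{hist}(\his)$. The same reasoning applied to $(\his,(u',y),\his'')$ gives $u'=\pi_{hist}(\his)$. Hence $u=u'$, so $\his'=\his\cat(u,y)=\his\cat(u',y)=\his''$, and therefore $\imap_\pi(\his')=\imap_\pi(\his'')$ trivially.

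Now suppose $\imap_{att}(\his)=0$. Because $\his'=\fmaphist(\his,(u,y))$ and $\his''=\fmaphist(\his,(u',y))$, Lemma~\ref{lem:not_attainable} yields $\his'\in\imap_{att}\preimg(0)$ and $\his''\in\imap_{att}\preimg(0)$. Consequently the definition \eqref{eqn:k_pi} of $\imap_\pi$ assigns $\imap_\pi(\his')=\panic=\imap_\pi(\his'')$. Together with the previous case this establishes the claim.

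I do not expect a substantive obstacle. The only two places needing care are (i) extracting from \eqref{eqn:policy_restrict}--\eqref{eqn:rest_trans_set} that a restricted transition out of an attainable history is the policy transition, which is what forces $u=u'$ in the first case; and (ii) recognizing that unattainability propagates along transitions, so that in the second case both successors are unattainable and hence receive the label $\panic$ --- which is precisely the role played by Lemma~\ref{lem:not_attainable}, stated immediately before.
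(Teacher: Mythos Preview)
Your proof is correct and follows essentially the same approach as the paper's: both argue by cases on $\imap_{att}(\his)$, forcing $u=u'=\pi_{hist}(\his)$ (hence $\his'=\his''$) in the attainable case via \eqref{eqn:policy_restrict}--\eqref{eqn:rest_trans_set}, and invoking Lemma~\ref{lem:not_attainable} together with \eqref{eqn:k_pi} to get $\imap_\pi(\his')=\imap_\pi(\his'')=\panic$ in the unattainable case. Your write-up is simply a bit more explicit in unpacking the definitions in the first case.
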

\begin{proof}
Suppose $\his \in \imap_{att}\preimg(1)$. Then $u=u'=\pi_{hist}(\eta)$ and since for all $y$ there is a unique $\his'$ by the construction of $\fmaphist\restriction_{\pi_{hist}}$ (see Eq.\eqref{eqn:rest_trans_set}) it follows that $\his'=\his''$. Suppose $\his \in \imap_{att}\preimg(0)$. Then, for any $(u,y) \in U \times Y$, the resulting $\his', \his''$ satisfy $\his', \his'' \in \imap_{att}\preimg(0)$ due to Lemma~\ref{lem:not_attainable}. Therefore, $\imap_{\pi}(\his') = \imap_\pi(\his'') = \panic$. 
\qed
\end{proof}
Let $\imap_{\histY}: \ifshist \rightarrow Y^{<\Nat}$ be an I-map that maps each action-observation history $\his=(y_1, u_1, \dots, u_{N-1}, y_{N})$ to the corresponding observation history $\histy=(y_1, \dots, y_{N})$.  
\begin{lemma}\label{lem:refinement_kappaY} The I-map $\imap_\histY$ with its domain restricted to $\ifshist^{\X_h}\restriction\pi_{hist} \cup \imap_{att}\preimg(0)$ is a refinement of $\imap_\pi$, that is, $\imap_\histY \succeq \imap_{\pi} $.
\end{lemma}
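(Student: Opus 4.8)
The plan is to show that the equivalence classes of $\imap_\histY$ (restricted to $\ifshist^{\X_h}\restriction\pi_{hist} \cup \imap_{att}\preimg(0)$) refine those of $\imap_\pi$, which by the definition of refinement amounts to showing: whenever two histories $\his, \his'$ have the same observation history, $\imap_\histY(\his) = \imap_\histY(\his')$, they also have the same $\imap_\pi$-label. Equivalently, it suffices to prove that $\imap_\pi$ factors through $\imap_\histY$, i.e.\ $\imap_\pi(\his)$ depends only on the observation sequence $\histy = \imap_\histY(\his)$. I would proceed by induction on the length $N$ of the observation sequence $\histy$.

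For the base case $N = 0$, only $\his = () = \his_0$ projects to the empty observation sequence, so there is nothing to check. For the inductive step, suppose $\his = \hat\his \cat (u, y)$ and $\his' = \hat\his' \cat (u', y)$ both lie in the restricted domain and share the observation history $\histy = \hat\histy \cat (y)$; then $\hat\his$ and $\hat\his'$ share the shorter observation history $\hat\histy$. By the inductive hypothesis $\imap_\pi(\hat\his) = \imap_\pi(\hat\his')$. Now split on the two cases of \eqref{eqn:k_pi}. If $\imap_{att}(\hat\his) = 1$, then $\imap_\pi(\hat\his') = \pi_{hist}(\hat\his') = \pi_{hist}(\hat\his)$ is not $\panic$, so $\imap_{att}(\hat\his') = 1$ as well; hence $u = \pi_{hist}(\hat\his)$ and $u' = \pi_{hist}(\hat\his')$ coincide. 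Therefore $(\hat\his, (u,y), \his)$ and $(\hat\his', (u, y), \his')$ are transitions of the same form, and Lemma~\ref{lem:u_for_y} (applied with $\his \mapsto \hat\his$, using that the transitions in $\fmaphist\restriction_{\pi_{hist}}$ issuing from $\hat\his$ and from $\hat\his'$ are both present) gives $\imap_\pi(\his) = \imap_\pi(\his')$. If instead $\imap_{att}(\hat\his) = 0$, then by the inductive hypothesis $\imap_\pi(\hat\his') = \panic$ too, so $\imap_{att}(\hat\his') = 0$; by Lemma~\ref{lem:not_attainable} both $\his$ and $\his'$ are unattainable, whence $\imap_\pi(\his) = \imap_\pi(\his') = \panic$.

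The one subtlety I expect to be the main obstacle is purely domain-bookkeeping: Lemma~\ref{lem:u_for_y} is phrased for two transitions issuing from the \emph{same} history $\his$, whereas here the two transitions issue from (possibly) different histories $\hat\his$ and $\hat\his'$ with $\imap_\pi(\hat\his) = \imap_\pi(\hat\his')$. So rather than invoking Lemma~\ref{lem:u_for_y} as a black box I would inline its argument: in the attainable case, once $u = u'$ is established, the construction of $\fmaphist\restriction_{\pi_{hist}}$ in \eqref{eqn:rest_trans_set} makes the successor uniquely determined by the predecessor and the pair $(u,y)$, and the successor's $\imap_\pi$-value is in turn a function of the successor history; tracking that the equality of $\imap_\pi$-values propagates (not the histories themselves) is the only place care is needed. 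Everything else — membership of $\his, \his'$ in the restricted domain, and the $\panic$ case — is immediate from Lemma~\ref{lem:not_attainable} and the definitions \eqref{eqn:policy_restrict}–\eqref{eqn:k_pi}.
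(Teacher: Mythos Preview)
Your proof is correct and follows essentially the same approach as the paper: the paper's proof simply asserts, citing Lemma~\ref{lem:u_for_y}, that each $\imap_\histY$-preimage (within the restricted domain) is either a singleton or consists entirely of unattainable histories labeled~$\panic$, which is exactly what your induction establishes. You are right that invoking Lemma~\ref{lem:u_for_y} as a black box is a slight abuse (it concerns two transitions issuing from the \emph{same} history, not from two different histories with equal $\imap_\pi$-value), and your explicit induction with the inlined argument is the proper way to fill in the paper's one-line claim.
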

\begin{proof}
Let $[\histy]_{\imap_\histY}$ be an equivalence class induced by $\imap_\histY$. By Lemma~\ref{lem:u_for_y}, the preimage of $\imap_\histY\preimg(\histy)$ is either a singleton or it satisfies $\imap_\pi(\his)=\panic$ for all $\his \in \imap_\histY\preimg(\histy)$. This proves that for all $A \in \ifshist^{\X_h}\restriction\pi_{hist} \cup \imap_{att}\preimg(0)/\imap_\histY$ there exists a $B \in \ifshist^{\X_h}\restriction\pi_{hist} \cup \imap_{att}\preimg(0)/\imap_{\pi}$ such that $A \subseteq B$. 
\qed
\end{proof}

Thanks to Lemma~ \ref{lem:refinement_kappaY}, we can define the restriction of the history ITS by $\pi_{hist}$ as a quotient of history ITS by $\imap_\histY$, that is, $S_{hist}/\imap_\histY$, together with a labeling function $\pi: Y^{<\Nat} \rightarrow U^\panic$ which is defined through $\imap_\pi$. Furthermore, due to Lemmas~\ref{lem:u_for_y} and \ref{lem:refinement_kappaY}, the I-state transitions need to depend only on elements of $Y$. Therefore, we will define the set of transitions $\fmap_\histY$ by taking the projection of $\fmaphist'$ onto $Y^{<\Nat}\times Y \times Y^{<\Nat}$.

\begin{definition}[Restriction of history ITS by $\pi_{hist}$] The restriction of a history ITS by $\pi_{hist}$ is the state-relabeled ITS, $\mathcal{S}_{hist}\restriction_{\pi_{hist}}=(Y^{<\Nat}, Y, \phi_\histY, (), \pi, U^{\panic})$ such that under $\pi : Y^{<\Nat} \rightarrow U^\panic$, $\histy \mapsto \imap_{\pi}(\his)$, in which $\his \in \imap_Y\preimg(\histy)$. Note that for all $\his, \his' \in \imap_\histY\preimg(\histy)$, $\imap_\pi(\his)=\imap_\pi(\his')$ (Lemma~\ref{lem:u_for_y}).
\end{definition}

\begin{lemma}\label{lem:fullness_restriction} The ITS corresponding to the restriction of the history ITS by $\pi_{hist}$, that is, $(Y^{<\Nat}, Y, \phi_\histY)$ is full\footnote{A transition system
    $(S,\Lambda,T)$ is called \emph{full}, if $\forall s\in S, \lambda \in \Lambda$
    there exists at least one $s'\in S$ with $(s,\lambda,s')\in T$.}. 
\end{lemma}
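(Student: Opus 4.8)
The plan is to check the fullness condition of the footnote directly: for an arbitrary state $\histy\in Y^{<\Nat}$ and an arbitrary label $y\in Y$ I will exhibit a successor, the natural candidate being $\histy'=\histy\cat(y)$. Recall that $\phi_\histY$ is obtained from the restricted transition set $\fmaphist\restriction_{\pi_{hist}}$ of \eqref{eqn:rest_trans_set} by identifying action--observation histories through $\imap_\histY$ and dropping the action component of each edge label. It therefore suffices to produce (i) an action--observation history $\his$ lying in the domain of $\fmaphist\restriction_{\pi_{hist}}$, i.e.\ in $\ifshist^{\X_h}\restriction\pi_{hist}\cup\imap_{att}\preimg(0)$, with $\imap_\histY(\his)=\histy$, and (ii) an action $u\in U$ for which the $\fmaphist$-transition $(\his,(u,y),\his\cat(u,y))$ survives the restriction; projecting that surviving transition through $\imap_\histY$ then yields precisely $(\histy,y,\histy\cat(y))\in\phi_\histY$. (As a sanity check, one can also note that, as a bare transition system, $(Y^{<\Nat},Y,\phi_\histY)$ is $S_{hist}/\imap_\histY$ with the action component of labels forgotten; since $\fmaphist$ is a total function given by concatenation it is in particular full, and the quotient of a full system by a labeling that is sufficient on it is again full, $\imap_\histY$ being sufficient because the next observation history is the current one with $y$ appended, regardless of the action taken. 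I still carry out (i)--(ii) to keep the argument self-contained and because the state set is declared to be all of $Y^{<\Nat}$, not merely the image of $\imap_\histY$ on the restricted domain.)

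For (i) I will construct $\his$ prefix by prefix along $\histy=(y_1,\dots,y_N)$: starting from $()$, whenever the current prefix $\his^{(i-1)}$ is attainable I append $\bigl(\pi_{hist}(\his^{(i-1)}),y_i\bigr)$, and otherwise I append $(u^\ast,y_i)$ for a fixed $u^\ast\in U$; set $\his:=\his^{(N)}$. Then $\imap_\histY(\his)=\histy$ by construction. A two-case argument closes the gap: if $\his\in\imap_{att}\preimg(0)$ it already lies in the domain; if $\his$ is attainable, then every prefix of $\his$ is attainable as well --- each $\his^{(i)}$ is the history component of a state of the coupled system $(\ifshist\times X,U,\fmaphist *_{h} f)$ that is reachable from $((),x_1)$ --- so each appended action was the policy action and \eqref{eqn:policy_restrict} places $\his$ in $\ifshist^{\X_h}\restriction\pi_{hist}$. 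The degenerate case $N=0$, where $\his=()$, is dealt with by treating the initial state $()$ as a state of the domain.

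For (ii) I will use that $\fmaphist$ is full, so $(\his,(u,y),\his\cat(u,y))\in\fmaphist$ for every $u$, and then argue that one such edge is retained by \eqref{eqn:rest_trans_set}: if $\his$ is attainable, take $u=\pi_{hist}(\his)$, which is exactly the action that the restriction keeps; if $\his$ is unattainable, take any $u$ and invoke Lemma~\ref{lem:not_attainable}, which guarantees that the successor $\his\cat(u,y)$ is again unattainable, hence still a legitimate state in the domain, so the edge is not discarded. Projecting this edge through $\imap_\histY$ gives $(\histy,y,\histy\cat(y))\in\phi_\histY$, completing the verification. The main obstacle is exactly step (ii): one must make sure that passing to $\pi_{hist}$ never strips a state of all outgoing edges --- for attainable histories the surviving edge is the policy edge, while for unattainable histories the persistence statement of Lemma~\ref{lem:not_attainable} is what keeps the target (and hence the edge) in play; by contrast step (i) is routine once one observes that prefixes of attainable histories are attainable.
\qed
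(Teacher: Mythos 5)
The paper states Lemma~\ref{lem:fullness_restriction} without any proof, so there is no argument of ours to compare yours against; judged on its own, your verification is correct and is essentially the argument the surrounding text implicitly relies on. You identify the right crux: every $\histy \in Y^{<\Nat}$ (including observation sequences that cannot physically occur) has a preimage $\his$ in the domain of $\fmaphist\restriction_{\pi_{hist}}$, obtained by following the policy while the prefix remains attainable and padding with an arbitrary action once it does not, and from that preimage an outgoing edge labeled by any $y$ survives the restriction --- the policy edge when $\his$ is attainable, and any edge when $\his$ is unattainable, with Lemma~\ref{lem:not_attainable} guaranteeing the target stays in the domain. Your handling of the $N=0$ case (the initial state $()$, which the stated domain $\ifshist^{\X_h}\restriction\pi_{hist} \cup \imap_{att}\preimg(0)$ technically omits) is a sensible repair of a gap in our definitions rather than a flaw in your proof. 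One caveat worth making explicit: your step (ii) uses the \emph{intended} semantics of the restriction rather than the literal formulas \eqref{Eq_phi_hist_prime}--\eqref{eqn:rest_trans_set}. Read literally, an edge is retained only if its \emph{target} lies in $\ifshist^{\X_h}\restriction\pi_{hist}$, which would discard every edge out of an unattainable history (and every policy edge whose observation cannot occur), making both the lemma and the paper's own claim about the domain of $\fmaphist\restriction_{\pi_{hist}}$ false; the reading you adopt (retain all edges whose source and target remain in the domain, i.e.\ policy edges from attainable histories for every $y$, and all edges from unattainable histories) is the one required by Lemma~\ref{lem:u_for_y} and by the $\panic$-labeling via $\imap_\pi$, so your proof matches the construction as it is actually used. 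Your parenthetical alternative --- fullness of $\fmaphist$ plus sufficiency of $\imap_\histY$ passed through the quotient --- is also sound, but as you note it does not by itself cover states of $Y^{<\Nat}$ outside the image of the restricted domain, which is exactly why the explicit preimage construction is needed.
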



Let $\Pi((\ifs, Y, \phi), \mu) : Y^{<\Nat} \rightarrow U^\panic$ 
be a function in which $(\ifs, Y, \phi)$ is an ITS and $\mu: \ifs \rightarrow U^\panic$ is a labeling function. Given an input sequence of some length $N$, that is, $\tilde{y}=(y_1, y_2, \dots, y_N)$, $\histy \mapsto u_N$ under $\Pi((\ifs, Y, \phi), \mu)$, in which, $u_N$ is the last element of the respective output sequence $\histu = (\mu(\is_1), \mu(\is_2), \dots, \mu(\is_N))$ with $\is_i=\phi(\is_{i-1},y_i)$ for $i=1,\dots,N$.

\begin{definition}[Supports $\pi_{hist}$] Let $\big(Y^{<\Nat}, Y, \phi_\histY, (), \pi, U^\panic\big)$ be the restriction of the history ITS $(\ifshist, U\times Y, \phi_{hist}, ())$ by a policy $\pi_{hist}$.
Let $S=(\ifs, Y, \phi)$ be an ITS. $S$ supports $\pi_{hist}$ if there exists $\mu : \ifs \rightarrow U$ such that $\Pi(\ifs, Y, \phi, \mu) = \pi$. 
\end{definition}


The following theorem establishes a necessary and sufficient condition for an ITS $(\ifs, Y, \fmap, \is_0)$ for it to support the feasible policy $\pi_{hist}$.

\begin{theorem}\label{thm:nec_suff_cond_ITS}
Let $\big(Y^{<\Nat}, Y, \phi_\histY, (), \pi, U^\panic\big)$ be the restriction of the history ITS by a feasible policy $\pi_{hist}$.
An ITS $S=(\ifs, Y, \fmap, \is_0)$ supports $\pi_{hist}$ if and only if $S$ is the quotient of $\big(Y^{<\Nat}, Y, \phi_{\histY}, ()\big)$ by some sufficient $\imap$ satisfying $\imap \succeq \pi$.
\end{theorem}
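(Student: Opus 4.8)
The plan is to prove the two directions of the equivalence separately, in each case using the "restriction of the history ITS" $(Y^{<\Nat}, Y, \phi_\histY, (), \pi, U^\panic)$ as the canonical object against which everything is measured. The key technical fact I will lean on is the one recalled after the definition of sufficiency: for a labeling function $\imap$ on a transition system, the quotient $S/\imap$ is again an ITS (i.e.\ $\phi/\imap$ is a \emph{function}) exactly when $\imap$ is sufficient. Combined with Lemma~\ref{lem:fullness_restriction} (fullness of $(Y^{<\Nat}, Y, \phi_\histY)$), this means that when $\imap$ is sufficient the quotient is a deterministic, full transition system on which running an input sequence $\histy=(y_1,\dots,y_N)$ produces a well-defined state sequence, so the map $\Pi(\cdot)$ is well-defined on it.

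For the ``if'' direction, suppose $S=(\ifs,Y,\fmap,\is_0)$ is the quotient $(Y^{<\Nat}, Y, \phi_\histY, ())/\imap$ for a sufficient $\imap$ with $\imap\succeq\pi$. I must exhibit $\mu:\ifs\to U$ with $\Pi(\ifs,Y,\fmap,\mu)=\pi$. Since $\imap\succeq\pi$ (refinement), every equivalence class $[\histy]_\imap$ lies inside a single class of $\pi$, so all $\histy'\in[\histy]_\imap$ receive the same value $\pi(\histy')$; define $\mu([\histy]_\imap)$ to be that common value (on $\panic$-classes this is $\panic$, which I either fold into $U$ harmlessly or handle via the convention that unattainable histories are irrelevant — I will state this as the technical caveat below). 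Then I run the quotient ITS on an arbitrary input word $\histy=(y_1,\dots,y_N)$: by sufficiency the state reached after reading $(y_1,\dots,y_i)$ is exactly $[\,(y_1,\dots,y_i)\,]_\imap$ — this is a short induction using the definition of $\phi/\imap$ and the fact that $\phi_\histY$ itself takes the prefix $(y_1,\dots,y_{i-1})$ to $(y_1,\dots,y_i)$. Hence the output at stage $N$ under $\mu$ equals $\mu([\histy]_\imap)=\pi(\histy)$, so $\Pi(\ifs,Y,\fmap,\mu)=\pi$ and $S$ supports $\pi_{hist}$.

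For the ``only if'' direction, assume $S=(\ifs,Y,\fmap,\is_0)$ supports $\pi_{hist}$, witnessed by $\mu:\ifs\to U$ with $\Pi(\ifs,Y,\fmap,\mu)=\pi$. Define $\imap:Y^{<\Nat}\to\ifs$ by sending a word $\histy=(y_1,\dots,y_N)$ to the state of $S$ reached by reading $\histy$ from $\is_0$, i.e.\ $\imap(\histy)=\is_N$ with $\is_i=\fmap(\is_{i-1},y_i)$. This is a labeling function on the state set $Y^{<\Nat}$ of the restricted history ITS. I will check three things: (i) $\imap$ is \emph{sufficient} on $(Y^{<\Nat}, Y, \phi_\histY)$ — this is immediate because $\phi_\histY$ just appends a symbol and $\fmap$ is a function, so if $\imap(\histy)=\imap(\histy')$ then reading one more symbol $y$ gives $\fmap(\imap(\histy),y)=\fmap(\imap(\histy'),y)$, i.e.\ $\imap(\histy\cat y)=\imap(\histy'\cat y)$; (ii) $\imap\succeq\pi$ — if $\imap(\histy)=\imap(\histy')=\is_N$ then the two output sequences end in the same symbol $\mu(\is_N)$, and since $\Pi(\ifs,Y,\fmap,\mu)=\pi$ this common value is $\pi(\histy)=\pi(\histy')$, so each $\imap$-class sits inside a single $\pi$-class; (iii) $S\cong (Y^{<\Nat},Y,\phi_\histY,())/\imap$ — the underlying sets agree up to the relabeling $[\histy]_\imap\mapsto\imap(\histy)$ (surjective onto the reachable part of $\ifs$; I will remark that we identify $\ifs$ with the reachable subset, as is standard), the initial states match ($\imap(())=\is_0$), and $\phi/\imap$ transports to $\fmap$ because $(\,[\histy]_\imap, y, [\histy\cat y]_\imap\,)\in\phi_\histY/\imap$ corresponds to $(\is_N,y,\fmap(\is_N,y))\in\fmap$.

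The main obstacle — and the point where I will be most careful — is the bookkeeping around the dummy label $\panic$ and the domains. The restricted history ITS carries $\pi:Y^{<\Nat}\to U^\panic$ and the theorem's $\mu$ is advertised as $\ifs\to U$ (no $\panic$), while the ``Supports $\pi_{hist}$'' definition requires $\Pi(\ifs,Y,\fmap,\mu)=\pi$ as functions into $U^\panic$. The honest reading is that we only constrain behavior on attainable/realizable inputs (those in $\imap_\histY(\ifshist^{\X_h}\restriction\pi_{hist})$), on which $\pi$ never takes value $\panic$; on $\panic$-inputs the value of $\mu$ along the way is immaterial. I will make this explicit by restricting attention to the relevant sub-domain (or, equivalently, noting $\panic$-classes form a ``sink'' region by Lemma~\ref{lem:not_attainable}) so that both directions go through cleanly, and I will flag that the isomorphism in the ``only if'' direction is up to identifying $\ifs$ with its reachable subset. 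Everything else is routine induction on word length using determinism of $\fmap$ and $\phi_\histY$.
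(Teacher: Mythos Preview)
Your proposal is correct, and the ``if'' direction matches the paper's argument essentially verbatim: define $\mu$ on each $\imap$-class by the common $\pi$-value guaranteed by $\imap\succeq\pi$, then observe that running the quotient on $\histy$ lands in $[\histy]_\imap$.

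The ``only if'' direction, however, is genuinely more complete than what the paper does. The paper's proof of that direction is phrased as a contrapositive, ``if $\imap\not\succeq\pi$ then no $\mu$ exists,'' which tacitly presupposes that the given $S$ already \emph{is} a quotient by some $\imap$ and only argues about the refinement condition. You instead construct $\imap$ explicitly as the run map $\histy\mapsto\text{state of }S\text{ after reading }\histy$, and then verify all three required properties: sufficiency of $\imap$, the refinement $\imap\succeq\pi$ (via $\Pi(\ldots,\mu)=\pi$), and that $S$ is isomorphic to the quotient (up to reachable states). This fills a gap the paper leaves implicit. Your careful remarks about the $\panic$ label and the $\mu:\ifs\to U$ versus $U^\panic$ mismatch are also well taken; the paper quietly lets $\mu$ range over $U^\panic$ in its construction without comment, so your explicit handling is an improvement rather than a deviation.
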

\begin{proof}
\textbf{$\implies$ direction (If $\imap \not\succeq \pi$, then there does not exist a $\mu$):} 
Suppose $\imap$ is not a refinement of $\pi$. Then, there exist $\is \in \ifs$ and $u \in U^\panic$ such that $\imap\preimg(\is) \setminus \pi\preimg(u)\neq \emptyset$. 
This implies that there exist $\histy$ and $\histy'$ such that $\imap(\histy)=\imap(\histy')$ and $\pi(\histy) \neq \pi(\histy')$. Then there does not exists a $\mu$ that satisfies $\Pi((\ifs, Y, \phi), \mu) = \pi$ since input sequences $\histy$ and $\histy'$ cannot be distinguished by $\imap$.

\textbf{$\impliedby$ direction (If $\imap \succeq \pi$, then there exists a $\mu$):} We will prove this by construction. Since $\imap$ is a refinement of $\pi$, every set in $Y^{<\Nat}/\imap$ is a subset of $Y^{<\Nat}/\pi$ which implies that for all $\is \in \ifs$ it is true that for each $\histy,\histy' \in \imap\preimg(\is)$, $\pi(\histy) = \pi(\histy')$. Then, there exists a function $\mu: \ifs \rightarrow U^\panic$ such that $\mu(\imap(\histy))= \mu(\imap(\histy'))$ since $\pi(\histy) = \pi(\histy') $ for all $\histy, \histy' \in \imap\preimg(\is)$.  
\qed
\end{proof}

Clearly, $\imap=\pi_{hist}$ satisfies this condition. 
However, in most cases it is not computationally feasible to define policies over entire histories. Therefore, we look for a minimal $\imap$ that satisfies this condition. This corresponds to finding a minimal sufficient refinement of $\pi$ that gives out the minimal ITS that can support $\pi_{hist}$ (see also Theorem 3 in \cite{sakcak2023mathematical}). This is stated in the following result.

\begin{corollary} Let $\big(Y^{<\Nat}, Y, \phi_Y, (), \pi, U^{\panic}\big)$ be the restriction $S_{hist}\restriction_{\pi_{hist}}$ and let $\bar{\pi}$ be a minimal sufficient refinement of $\pi$. A minimal ITS that supports $\pi_{hist}$ is the quotient of $(Y^{<\Nat}, Y, \phi_\histY, ())$ by $\bar{\pi}$.  
Furthermore, this minimal ITS is unique.
\end{corollary}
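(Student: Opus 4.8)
The plan is to read the minimal supporting ITS directly off Theorem~\ref{thm:nec_suff_cond_ITS} by identifying the coarsest sufficient refinement of $\pi$. By that theorem, the ITSs that support $\pi_{hist}$ are exactly the quotients $(Y^{<\Nat},Y,\phi_\histY,())/\imap$ ranging over sufficient I-maps $\imap$ with $\imap\succeq\pi$; and since a coarser I-map induces fewer equivalence classes, hence a smaller state space, the corollary reduces to three points: (i) the family of sufficient refinements of $\pi$ has a unique coarsest element $\bar\pi$; (ii) the quotient $(Y^{<\Nat},Y,\phi_\histY,())/\bar\pi$ is a quotient of every supporting ITS, hence minimal; and (iii) minimality pins this ITS down up to relabeling of states.

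For (i) I would exhibit $\bar\pi$ directly as a behavioral congruence on $Y^{<\Nat}$: declare $\histy\sim\histy'$ iff $\pi(\histy\cat\tilde z)=\pi(\histy'\cat\tilde z)$ for every $\tilde z\in Y^{<\Nat}$, where $\histy\cat\tilde z$ denotes the state reached from $\histy$ by iterating $\phi_\histY$ along $\tilde z$, and let $\bar\pi$ be the induced I-map. Instantiating $\tilde z=()$ gives $\bar\pi\succeq\pi$; since $\phi_\histY$ is concatenation, $\histy\sim\histy'$ implies $\phi_\histY(\histy,y)\sim\phi_\histY(\histy',y)$ for all $y\in Y$ (any continuation $\tilde z$ of $\phi_\histY(\histy,y)$ corresponds to the continuation $y\cat\tilde z$ of $\histy$), so $\bar\pi$ is sufficient. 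Conversely, let $\imap$ be any sufficient I-map with $\imap\succeq\pi$. If $\imap(\histy)=\imap(\histy')$, sufficiency gives $\imap(\histy\cat\tilde z)=\imap(\histy'\cat\tilde z)$ for all $\tilde z$ by induction on the length of $\tilde z$, and $\imap\succeq\pi$ then yields $\pi(\histy\cat\tilde z)=\pi(\histy'\cat\tilde z)$ for all $\tilde z$, i.e.\ $\histy\sim\histy'$; hence $\imap\succeq\bar\pi$. Thus $\bar\pi$ is the coarsest sufficient refinement of $\pi$ and is unique, and any minimal sufficient refinement in the sense of the corollary must equal it. By Lemma~\ref{lem:fullness_restriction} and the fact that a quotient of a full transition system by a sufficient I-map is again a full ITS, $(Y^{<\Nat},Y,\phi_\histY,())/\bar\pi$ is a well-formed ITS, and the $\impliedby$ direction of Theorem~\ref{thm:nec_suff_cond_ITS} shows it supports $\pi_{hist}$.

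For (ii) and (iii): any supporting ITS is, by Theorem~\ref{thm:nec_suff_cond_ITS}, of the form $(Y^{<\Nat},Y,\phi_\histY,())/\imap$ for some sufficient $\imap\succeq\bar\pi$, and on it the assignment $[\histy]_\imap\mapsto[\histy]_{\bar\pi}$ is a well-defined sufficient I-map whose quotient is isomorphic to $(Y^{<\Nat},Y,\phi_\histY,())/\bar\pi$; hence the latter is a quotient of every supporting ITS, so it has the fewest states and admits no proper supporting quotient (such a quotient would be a sufficient refinement of $\pi$ strictly coarser than $\bar\pi$, contradicting (i)) --- this is minimality. If $S_1$ and $S_2$ are both minimal supporting ITSs, then $(Y^{<\Nat},Y,\phi_\histY,())/\bar\pi$ is a quotient of each; a proper quotient would contradict the minimality of that $S_i$, so $S_1\cong(Y^{<\Nat},Y,\phi_\histY,())/\bar\pi\cong S_2$, which is the asserted uniqueness up to relabeling.

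The step I expect to be the main obstacle is (i): verifying carefully that the congruence $\sim$ is genuinely sufficient and is coarser than \emph{every} sufficient refinement of $\pi$, and identifying the ``mild assumptions'' under which the resulting quotient is a legitimate (e.g.\ suitably finite or otherwise realizable) ITS in the general action--observation setting. Once (i) is secured, (ii) and (iii) are a routine transfer between the partition order on I-maps and the quotient order on ITSs supplied by Theorem~\ref{thm:nec_suff_cond_ITS}.
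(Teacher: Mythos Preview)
Your argument is correct. The paper's own proof is a two-line appeal to an external result: it invokes Theorem~4.19 of \cite{WeiSakLav22}, which asserts that a minimal sufficient refinement of a labeling on a \emph{full} transition system is unique, and then cites Lemma~\ref{lem:fullness_restriction} to check fullness of $(Y^{<\Nat},Y,\phi_\histY,())$. Everything else (existence, and that the quotient by $\bar\pi$ is the minimal supporting ITS) is left implicit via Theorem~\ref{thm:nec_suff_cond_ITS}.

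Your route is different in that it is self-contained: you explicitly construct $\bar\pi$ as the Nerode-type congruence $\histy\sim\histy'\iff\forall\tilde z\in Y^{<\Nat},\ \pi(\histy\cat\tilde z)=\pi(\histy'\cat\tilde z)$ and verify directly that it is the unique coarsest sufficient refinement of $\pi$. This is essentially a proof of the cited Theorem~4.19 specialized to the present setting, so what you gain is an explicit description of $\bar\pi$ and independence from the external reference; in fact your coarsest-element argument for (i) does not even use fullness, which you only invoke (correctly) to guarantee that the quotient is a bona fide ITS. The paper's approach is more economical within its own series of papers but less informative. Your parts (ii)--(iii), deducing minimality and uniqueness of the ITS from the partition order via Theorem~\ref{thm:nec_suff_cond_ITS}, are routine and match what the paper leaves implicit.
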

\begin{proof}
Uniqueness follows from Theorem~4.19 in \cite{WeiSakLav22} 
which states that the minimal sufficient refinement of a labeling function defined over the states of a transition system is unique if the transition system is full. By Lemma~\ref{lem:fullness_restriction}, $S_{hist}\restriction_{\pi_{hist}}$ is full.
\qed
\end{proof}

\begin{corollary} Suppose that $\pi_{hist}$ is a feasible policy and $(\ifs, Y, \phi, \is_0, \mu, U^\panic)$ is an ITS that supports $\pi_{hist}$. Then, the labeling function $\mu : \ifs \rightarrow U^\panic$ is a feasible policy as defined in Definition~\ref{def:feasible_policy}.
\end{corollary}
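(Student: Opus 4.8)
The plan is to show that, for every initial external state $x_1\in X$, the coupled system $S_\mu\star\X_h$ and the coupled system obtained by running the history ITS under $\pi_{hist}$ against $\X_h$ generate the \emph{same} action--observation history, and then to invoke feasibility of $\pi_{hist}$. First I would fix $x_1\in X$ and consider the unique trajectory of $S_\mu\star\X_h$ initialized at $(\is_0,x_1)$; write $\is_0,\is_1,\is_2,\dots$ for the ITS states, $x_1,x_2,\dots$ for the external states, and $u_1,u_2,\dots$ and $y_1,y_2,\dots$ for the actions and observations along this trajectory, where each observation is $h$ of the corresponding external state. Let $\his_k$ be the action--observation history accumulated after $k$ stages and $\histy_k=\imap_\histY(\his_k)$ its observation subsequence. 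A preliminary observation is that, because $\phi\colon\ifs\times Y\to\ifs$ consumes only observations, the ITS state reached along this trajectory coincides with the state obtained by iterating $\phi$ on $\histy_k$ from $\is_0$; hence, by the definition of $\Pi$, the action prescribed by $\mu$ after stage $k$ equals $\Pi(\ifs,Y,\phi,\mu)(\histy_k)=\mu(\is_k)$.

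Next I would prove by induction on $k$ that $\his_k$ equals the history produced after $k$ stages by coupling the history ITS under $\pi_{hist}$ to $\X_h$ from $((),x_1)$. The external component evolves identically whenever the same actions are applied, so it suffices to check that the actions agree stage by stage. At stage $k$ the history $\his_k$ is, by construction, realized by an actual run of the coupled system from $x_1$, hence $\his_k\in\ifshist^{\X_h}$, i.e.\ $\imap_{att}(\his_k)=1$; consequently $\pi(\histy_k)=\imap_\pi(\his_k)=\pi_{hist}(\his_k)$ by~\eqref{eqn:k_pi} and the definition of $S_{hist}\restriction_{\pi_{hist}}$. Since $S$ supports $\pi_{hist}$ we have $\Pi(\ifs,Y,\phi,\mu)=\pi$, so combining with the previous paragraph the action chosen by $\mu$ at $\is_k$ is exactly $\pi_{hist}(\his_k)$, which is precisely the action the history-ITS policy prescribes at $\his_k$. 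This closes the induction; in particular $\mu$ is never forced to output the dummy label $\panic$ on a reachable state, because every $\his_k$ arising along the trajectory is attainable.

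Finally, since $\pi_{hist}$ is feasible there is some $N$ (depending only on $x_1$) with $\his_N\in\imapb\preimg(1)$, at which point the termination condition halts the interaction. By the equality of histories just established, the coupled system $S_\mu\star\X_h$ initialized at $(\is_0,x_1)$ also yields $\his_N\in\imapb\preimg(1)$. As $x_1\in X$ was arbitrary, $\mu$ satisfies Definition~\ref{def:feasible_policy}, i.e.\ it is a feasible policy.

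The step I expect to be the main obstacle is the bookkeeping in the induction: carefully aligning the stage-index conventions used by the definition of $\Pi$, by $\imap_\pi$, and by the coupled transition function $\phi\star_{\mu,h}f$ (including the treatment of the initial observation), and making rigorous the claim that the ITS component of any reachable state of $S_\mu\star\X_h$ is a function of the observation history alone. Once that identification is pinned down, the rest of the argument is a direct unwinding of the definitions of ``supports $\pi_{hist}$'' and of the restriction $S_{hist}\restriction_{\pi_{hist}}$.
\qed
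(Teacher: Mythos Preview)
The paper states this corollary without proof; it is left as an immediate consequence of Theorem~\ref{thm:nec_suff_cond_ITS} and the definition of ``supports $\pi_{hist}$''. Your proposal supplies exactly the argument the authors are implicitly relying on: match the run of $S_\mu\star\X_h$ with the run of the history ITS under $\pi_{hist}$ stage by stage, using $\Pi(\ifs,Y,\phi,\mu)=\pi$ together with $\pi(\histy_k)=\imap_\pi(\his_k)=\pi_{hist}(\his_k)$ on attainable histories, and then invoke feasibility of $\pi_{hist}$. The reasoning is correct, and the ``obstacle'' you flag (index bookkeeping and the observation that the ITS state depends only on $\histy_k$) is genuine but purely notational---it does not hide any missing idea.
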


\begin{figure}[t!]
\centering
\subfigure[]{\includegraphics[width=.2\linewidth]{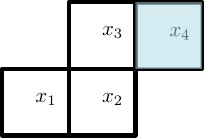}} \hspace{1.5em}
\subfigure[]{\includegraphics[trim={4cm 0 0 0},clip, width=.4\linewidth]{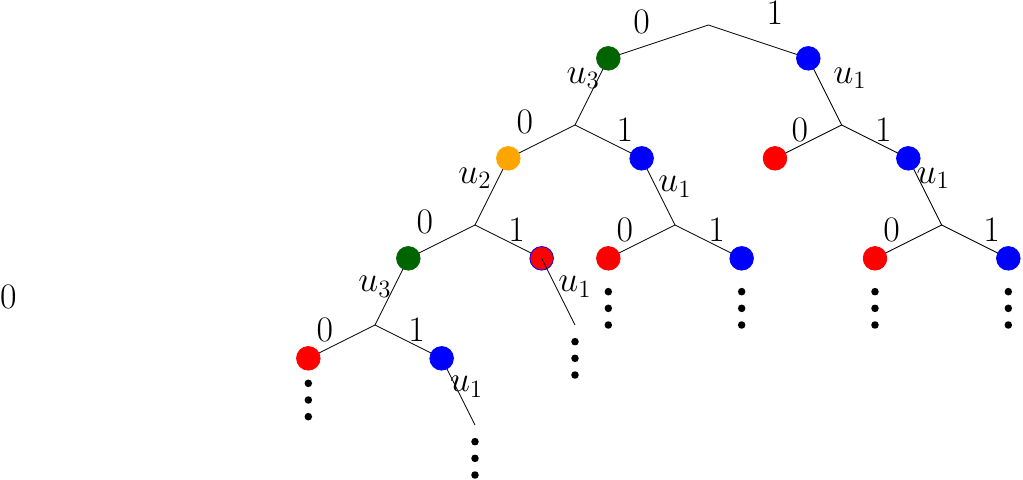}} \hspace{1.5em} 
\subfigure[]{\includegraphics[width=.25\linewidth]{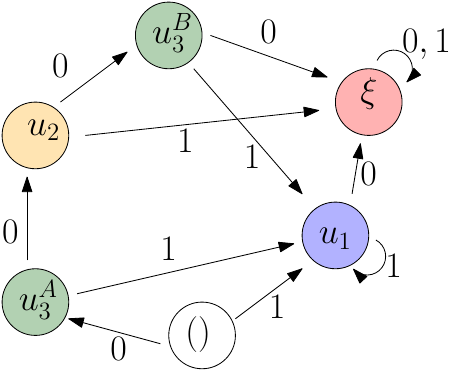}} 
    \caption{(a) Skewed tetromino environment. (b)Labels attributed by $\imap_\pi$ with labels $u_1$ (blue), $u_2$ (orange), $u_3$ (green), and $\xi$ (red). (c) $S$ is the quotient of $S_{hist}\restriction_{\pi_{hist}}$ by $\imap$.}
\label{fig:tetromino}
\end{figure}

The following example illustrates the introduced concepts.
\begin{example}[Skewed Tetromino Environment]\label{ex:tetromino}
Consider the Skewed Tetromino environment given in Figure~\ref{fig:tetromino}(a). The state space is $X=\{x_1, x_2, x_3, x_4\}$ with a sensor mapping $h(x) = 1$ for $x=x_4$ and $h(x)=0$, otherwise. The action space $U=\{u_1, u_2, u_3\}$ such that $u_1, u_2, u_3$ correspond to stopping, moving one cell up and moving one cell to the right, respectively. The task is defined as reaching the state $x_4$ which corresponds to obtaining $y_k = 1$ for some $k$. Figure~\ref{fig:tetromino}(b) shows the labeling determined by $\imap_\pi$ that distinguishes unattainable histories and histories labeled by $\pi_{hist}$. From $\kappa_\pi$ we obtain the restriction of history ITS, that is, $\mathcal{S}_{hist}\restriction_{\pi_{hist}}=(Y^{<\Nat}, Y, \phi_Y, (), \pi, U^{\panic})$. Notice that $\pi$ is not sufficient. Consider observation histories $\histy = (0)$ and $\histy'=(0, 0, 0)$ which satisfy $\pi(\histy) = \pi(\histy') = u_3$, however, $\pi(\histy \cat 0) \not= \pi(\histy'\cat 0)$. Figure~\ref{fig:tetromino}(c) shows the quotient $S=(\ifs, Y, \phi, \is_0)$ of $\mathcal{S}_{hist}\restriction_{\pi_{hist}}$ by $\imap$ which is the minimal sufficient refinement of $\pi$. This is the minimal ITS that can support $\pi_{hist}$. There exists a policy $\mu$ with 
\begin{equation}
    \mu(\is) = 
    \begin{cases}
        u_3 & \text{if } \is \in \{ u^A_3, u^B_3 \} \\
        \is &\text{otherwise},
    \end{cases}
\end{equation}
such that $S_\mu \star \X_h$ accomplishes the task. 
\end{example}

Let $\ifs_{ndet} \subseteq \pow(X)$ be a nondeterministic I-space and consider the ITS $S^{ndet} = (\ifs_{ndet}, Y, \phi_{ndet}, X)$. Let $\hat{X}(A, y) = \{ f(x, \mu'(A)) \mid x \in A\}$ and define $\phi_{ndet}(A, y) := \hat{X}(A, y) \cap h\preimg(y)$. Furthermore, define 
\begin{equation*}
    \mu'(\is) = 
    \begin{cases}
        () & \text{if } \is = X \\
        u_1 & \text{if } \is = \{ x_4 \} \\
        u_2 & \text{if } \is = \{x_2\} \\
        u_3 &\text{otherwise}.
    \end{cases}
\end{equation*}
\begin{proposition}\label{prop:isomorph_ndet} $S^{ndet}$ is isomorphic to $S$ defined in Example~\ref{ex:tetromino}.
\end{proposition}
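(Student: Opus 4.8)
The plan is to exhibit an explicit bijection between the state spaces of the two systems and verify that it commutes with the transition functions and respects the labelings. Recall from Example~\ref{ex:tetromino} that $\ifs$, the state space of $S$, is the set of equivalence classes of $Y^{<\Nat}/\imap$ where $\imap$ is the minimal sufficient refinement of $\pi$; concretely (from Figure~\ref{fig:tetromino}(c)) $\ifs$ has finitely many states, including the distinguished classes $u_3^A, u_3^B$ and singleton-type classes that $\mu$ maps to themselves. First I would enumerate the states of $S^{ndet}$: starting from the initial state $X=\{x_1,x_2,x_3,x_4\}$ and applying $\phi_{ndet}(A,y) = \hat X(A,\mu'(A)) \cap h\preimg(y)$ with the given $\mu'$, I would compute the reachable nondeterministic I-states. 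Since $h\preimg(0) = \{x_1,x_2,x_3\}$ and $h\preimg(1)=\{x_4\}$, and the dynamics $f$ are the tetromino "move up / move right / stop" maps, this reachability computation terminates with a finite list of subsets of $X$; I expect exactly the same number of states as in $\ifs$, matching the count in Figure~\ref{fig:tetromino}(c).

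Next I would define the candidate isomorphism $\Phi : \ifs \to \ifs_{ndet}$ (or its inverse) by tracking, for each observation history $\histy$, the set of external states consistent with $\histy$ under the policy encoded by $\mu$/$\mu'$ — this is precisely the function $g$-style "possible current states" set that the nondeterministic filter maintains. Formally, I would send the $\imap$-class $[\histy]_\imap$ to the state of $S^{ndet}$ reached by feeding $\histy$ into $\phi_{ndet}$ from the initial state $X$. To see this is well-defined I invoke that $\imap$ is sufficient and is a refinement of $\pi$ (Theorem~\ref{thm:nec_suff_cond_ITS}): histories identified by $\imap$ agree on all future actions, hence produce identical state-set evolutions in $S^{ndet}$, so $\Phi$ does not depend on the representative $\histy$. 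Injectivity follows because $\imap$ is the \emph{minimal} such refinement — if two $\imap$-classes mapped to the same nondeterministic I-state, the partition induced by $\histy \mapsto \phi_{ndet}^*(X,\histy)$ would be a strictly coarser sufficient refinement of $\pi$ still supporting $\pi_{hist}$, contradicting minimality (here I use uniqueness of the minimal sufficient refinement, Corollary after Theorem~\ref{thm:nec_suff_cond_ITS}, together with fullness from Lemma~\ref{lem:fullness_restriction}). Surjectivity is the termination of the reachability enumeration above.

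Then I would check the two structural compatibility conditions. For transitions: given a state $[\histy]$ and observation $y$, I must show $\Phi(\phi_\histY([\histy],y)) = \phi_{ndet}(\Phi([\histy]),y)$; this unwinds to the identity $\phi_{ndet}^*(X, \histy\cat y) = \phi_{ndet}(\phi_{ndet}^*(X,\histy), y)$, which is immediate from the recursive definition of $\phi_{ndet}$, provided the action chosen by $\mu$ at $[\histy]$ matches the action $\mu'$ chooses at $\Phi([\histy])$ — and this is exactly the label-compatibility condition, which I check by comparing $\mu$ from Example~\ref{ex:tetromino} with $\mu'$ case by case on the finite state set ($u_3^A, u_3^B \mapsto u_3$ on both sides, the $x_4$-class $\mapsto u_1$, the $x_2$-class $\mapsto u_2$, etc.), also confirming the initial states correspond ($\is_0 \leftrightarrow X$). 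The main obstacle — really the only non-bookkeeping part — is arguing injectivity cleanly: one must be careful that "the nondeterministic filter induces a sufficient refinement of $\pi$" genuinely holds, i.e. that the state-set map is itself sufficient (next state-set determined by current state-set and observation) and refines the action labeling; the first is the recursion just noted, and the second holds because $\mu'$ is constant on each reachable state-set by construction. Once both are established, minimality/uniqueness of $\imap$ forces $\Phi$ to be a bijection, and the structural checks complete the isomorphism. The remaining verifications are a finite, routine case analysis on the four-state tetromino, which I would relegate to the figure.
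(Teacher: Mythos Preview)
Your proposal is correct, and its core move---build the map by running both systems on the same observation history and verify compatibility---is the same as the paper's. Where you diverge is in the justification of bijectivity. The paper simply writes down the explicit correspondence on the handful of reachable states ($X\mapsto ()$, $\{x_1,x_2,x_3\}\mapsto u_3^A$, $\{x_2\}\mapsto u_2$, $\{x_3\}\mapsto u_3^B$, $\emptyset\mapsto\xi$, etc.) and then appeals to a direct transition-by-transition check. You instead invoke the minimality and uniqueness machinery (the corollary after Theorem~\ref{thm:nec_suff_cond_ITS} together with Lemma~\ref{lem:fullness_restriction}) to argue that the nondeterministic filter, being itself a sufficient refinement of $\pi$, must coincide with the minimal one, forcing $\Phi$ to be a bijection. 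Your route explains \emph{why} the isomorphism holds and would generalize beyond this four-state example; the paper's route is shorter and entirely self-contained for this particular finite system, which is really all the proposition claims. For a toy example of this size the paper's brute-force verification is arguably more appropriate, but your argument is the conceptually right one and nothing in it is wrong.
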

\begin{proof}
There exists a mapping $\psi$ that maps the states of $S^{ndet}$ to those of $S$ such that $X \mapsto ()$, $\{x_1, x_2, x_3\} \mapsto u_3^A$, $\{x_2\} \mapsto u_2$, $\{ x_3\} \mapsto u_3^B$, and $\emptyset \mapsto \xi$. Then, by checking the definitions, for any $A, B \in \ifs_{ndet}$ and any $\is, \is' \in \ifs$ if $\psi(A)=\is$ and $\psi(B)=\is'$ it is also true that $\phi(\is)=\is'$ implies $\phi_{ndet}(A)=B$.
\end{proof}
From Proposition~\ref{prop:isomorph_ndet}, it can be deduced that $S^{ndet}$ is a minimally sufficient ITS that can support $\pi_{hist}$ which takes the external system state to $x_4$. This is related to the \emph{Good Regulator Theorem} \cite{ConAsh70} which roughly states that for any policy to regulate the external system state to a target value, the internal system should encode a model of the external system.

\subsection{Multiple Policies}
So far we have considered a single task and looked for an ITS that can support a feasible policy for that task. 
However, typically, robots are expected to achieve multiple tasks. Therefore, in this section we will focus on ITSs that can support a set of feasible policies satisfying a set of tasks. 

Let $\T = \{\imap^{T_i}_{task}\}_{i=1,\dots,N}$ be a set of $N$ number of tasks such that each task $T_i$ induces a labeling function $\imap^{T_i}_{task}$. 
Then, for each task, there exists a feasible policy $\pi^{T_i}_{hist} : \ifshist \rightarrow U^\panic$ defined over the history \ac{Ispace}.  




\begin{theorem}\label{thm:nec_suff_mult_pols}
Let $S_{hist}\restriction_{\pi^{T_i}_{hist}}=(Y^{<\Nat}, Y, \phi_\histY, (), \pi_i, U^\panic)$ be the restriction of history ITS by a feasible policy $\pi^{T_i}_{hist}$ for $i=1,\dots,N$. 
Let $\pi^\T$ be the join (least upper bound) of $\{ \pi_i\}_{i=1,\dots,N}$.
An ITS $S=(\ifs, Y, \fmap, \is_0)$ supports $\pi^{T_i}_{hist}$ for all $i=1,\dots,N$, if and only if $S$ is the quotient of $(Y^{<\Nat}, Y, \phi_{\histY}, ())$ by some sufficient $\imap$ satisfying $\imap \succeq \pi^\T$.
\end{theorem}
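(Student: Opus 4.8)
The plan is to obtain this statement as a corollary of Theorem~\ref{thm:nec_suff_cond_ITS} together with the universal property of the join in the lattice of labeling functions ordered by refinement. First I would observe that all of the restrictions $S_{hist}\restriction_{\pi^{T_i}_{hist}}$ live over one and the same transition system $(Y^{<\Nat}, Y, \phi_\histY)$: by Lemmas~\ref{lem:u_for_y} and~\ref{lem:refinement_kappaY} the observation-history transitions of the restriction amount to appending an observation to $\histy$, and Lemma~\ref{lem:fullness_restriction} records that this is a full transition system; in particular it does not depend on which feasible policy we restrict by. Only the labelings $\pi_i\colon Y^{<\Nat}\to U^\panic$ differ (and, if the policy-following-or-unattainable domains of the $\pi_i$ do not literally coincide, one extends each $\pi_i$ by sending the remaining histories to $\panic$, which changes no quotient we care about). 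Consequently ``$S$ is the quotient of $(Y^{<\Nat}, Y, \phi_\histY, ())$ by some sufficient $\imap$'' is a single, $i$-independent notion; moreover the partition of $Y^{<\Nat}$ realized by such an $S$ is canonically determined by $S$ and $\is_0$ (read $\histy$ in $S$ from $\is_0$ and record the state reached). Since $S$ supports $\{\pi^{T_i}_{hist}\}_{i=1,\dots,N}$ means precisely that it supports each one, Theorem~\ref{thm:nec_suff_cond_ITS} applied to each $i$ gives: $S$ supports all of them iff its canonical quotient map $\imap$ is sufficient and satisfies $\imap\succeq\pi_i$ for every $i=1,\dots,N$.

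It then remains to show $\imap\succeq\pi_i$ for all $i$ is equivalent to $\imap\succeq\pi^\T$. The partitions of $Y^{<\Nat}$ form a complete lattice under refinement, so $\pi^\T:=\bigvee_{i=1}^N\pi_i$ exists, with blocks the nonempty intersections $\bigcap_{i=1}^N\pi_i\preimg(u_i)$. By definition $\pi^\T\succeq\pi_i$ for each $i$, so transitivity of $\succeq$ yields the ($\Leftarrow$) direction. For ($\Rightarrow$), if $\imap\succeq\pi_i$ for all $i$ then $\imap$ is an upper bound of $\{\pi_i\}$ in the refinement order, and since $\pi^\T$ is the \emph{least} such upper bound, $\imap\succeq\pi^\T$. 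Combining with the previous paragraph: $S$ supports $\pi^{T_i}_{hist}$ for all $i$ iff its quotient map is sufficient and refines $\pi^\T$, i.e.\ iff $S$ is the quotient of $(Y^{<\Nat}, Y, \phi_\histY, ())$ by some sufficient $\imap$ with $\imap\succeq\pi^\T$. Note that sufficiency of $\imap$ need only be extracted once, from any single instance of Theorem~\ref{thm:nec_suff_cond_ITS}; it is not required that $\pi^\T$ itself be sufficient.

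I expect the genuine content to be the bookkeeping in the first paragraph, namely checking that the $N$ restrictions really do share a common transition system so that one ITS $S$ can be compared against all of them simultaneously and so that ``the canonical quotient map of $S$'' is unambiguous. Once that is pinned down, the rest is a routine consequence of Theorem~\ref{thm:nec_suff_cond_ITS} and the elementary lattice fact that ``refines every $\pi_i$'' coincides with ``refines $\bigvee_i\pi_i$''; there is no inequality or explicit construction to push through beyond what Theorem~\ref{thm:nec_suff_cond_ITS} already supplies.
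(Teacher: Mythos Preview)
Your proposal is correct and follows essentially the same route as the paper: invoke Theorem~\ref{thm:nec_suff_cond_ITS} for each $i$ and then use the lattice property of refinement to collapse ``$\imap\succeq\pi_i$ for all $i$'' into ``$\imap\succeq\pi^\T$''. If anything, you are more careful than the paper, which only states the $(\Leftarrow)$ direction of the lattice equivalence explicitly and folds the rest into ``follows from Theorem~\ref{thm:nec_suff_cond_ITS}''; your observation that the $N$ restrictions share a common underlying transition system is also left implicit there.
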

\begin{proof}
The set of policies defined over $Y^{<\Nat}$ forms a lattice. Hence $\pi^\T$ exists and is unique. Because $\pi^\T$ is the join of $\{ \pi_i\}_{i=1,\dots,N}$, $\imap \succeq \pi^\T$ implies $\imap \succeq \pi_i$ for any $i = 1, \dots, N$. The rest of the proof follows from Theorem~\ref{thm:nec_suff_cond_ITS}.
\qed
\end{proof}

\subsection{Sufficient Sensors for Reactive Policies}

In the previous section, we fixed the sensor mapping $h : X \rightarrow Y$ and looked for an ITS that can support a particular feasible policy defined over the action-observation histories. In this section, we fix the form of the ITS and its dependence
on the sensor mapping $h$, and leave $h$ free. This implies that the task-induced labeling $\imap_{task}$ also depends on the selected sensor mapping. 

Let $\H$ be the set of all sensor mappings, or equivalently, the set of all partitions of $X$. In the following, $Y$ is seen as the set of labels attributed to the subsets forming the partition (see \cite{Lav12b}). Therefore, the particular range of $h$ does not carry any importance as long as it induces the same partitioning.
We fix the ITS to be of the form $S=(Y,Y, \id_Y, \is_0)$, in which $\id_Y$ is the identity function, and fix the policy $\pi_Y : Y \rightarrow U$ leaving $Y$ free, that is, leaving $h$ free. This results in a set of ITSs and respective reactive policies characterized by the sensor mapping $h$. With reactive policy we mean a policy that maps the each observation to an action, hence, same observation is always mapped to the same action. 
Let $S_{\pi_Y} = (Y, Y, \id_Y, \is_0, \pi_Y, U)$ and $\X_h=(X, U, f, h, Y)$ be an ITS labeled with the policy $\pi_Y$ and an external system with a sensor mapping $h$, respectively. 
Their coupling $S_{\pi_Y} \star \X_h$ initialized at $(\is_0, x_1)$ results in the following action-observation history 
\begin{equation*}
    \his_N = \left( h(x_1) , \left(u_1, h(x_2) \right), \dots \left(u_{N-1}, h(x_N) \right) \right),
\end{equation*}
in which each $x_{i+1} = f (x_i, u_i)$ and $u_i=\pi_Y \circ \id_Y \circ \, h(x_i)$.


\begin{definition}[Sufficient sensor for a reactive policy] A sensor mapping $h$ is called \emph{sufficient for a reactive policy} if any initialization $(\is_0, x_1)$ of $S_{\pi_Y} \star \X_h$ results in a history $\his_N \in \imapb\preimg(1)$ for some $N$. 
\end{definition}
Consider the strongest possible sensor $h_{bij} : X \rightarrow Y$, i.e.~a bijection. For notational simplicity, without loss of generality, we will assume $Y=X$ so that $h_{bij}$ is the identity function. In this case, the history I-space corresponds to $\ifshist = (X \times U)^{<\Nat}$.
Let $\pi_{hist}$ be a feasible policy and let $\ifshist^{\X_h}\restriction\pi_{hist}$ be the restriction of $\ifshist^{\X_h}$ by $\pi_{hist}$ as defined in Eqn.\eqref{eqn:policy_restrict}. 
From $\pihist$, we define a new policy $\pi_{\histX} : \histX \rightarrow U$ by setting
\begin{equation}\label{eqn:policy_histY}
\pi_{\histX}(\histx) := \pihist(\imap_{\histY}\preimg(\histx)).
\end{equation}
Due to Lemmas~\ref{lem:u_for_y} and \ref{lem:refinement_kappaY}, $\imap_{\histY}$ is invertible for the domain $\ifshist^{\X_h}\restriction\pi_{hist}$ so that $\imap_{\histY}\preimg(\histx)$ is unique. In the case of a bijective sensor mapping, $\pi_{\histX}$ is a policy that maps $(x_1, \dots, x_N)$ to some $u_N$.  


Let $\imap_Y : \histY \rightarrow Y$ be an I-map such that $ \imap_Y(\histy)=y_N$, in which $\histy=(y_1, \dots, y_N)$ is an observation sequence of some length $N$. Considering a bijective sensor, under $\imap_Y$, $\histx \mapsto x_N$, in which $\histx=(x_1,\dots,x_N)$ for some $N$.



\begin{lemma} There exists a feasible policy $\pi_X : X \rightarrow U$ if and only if there exists a feasible $\pi_{hist}$ satisfying the following condition C1:
\[
\textrm{\textit{(C1)}} \quad\pi_{\histX}(\histx)=\pi_{\histX}(\histx') \,\, \textrm{for all} 
\,\, 
\histx,\histx' \in \imap_Y\preimg(x).
\]
\end{lemma}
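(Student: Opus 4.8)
The statement to prove is a biconditional relating the existence of a feasible reactive policy $\pi_X : X \to U$ (with a bijective sensor, so $Y = X$) to the existence of a feasible history policy $\pi_{hist}$ whose induced state-trajectory policy $\pi_{\histX}$ factors through the last-state map $\imap_Y$. The plan is to prove both directions by exhibiting the relevant policy explicitly and then invoking the already-established feasibility machinery.

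\medskip
\noindent\textbf{($\Leftarrow$) direction.}
Assume a feasible $\pi_{hist}$ satisfying (C1). First I would observe that (C1) says exactly that $\pi_{\histX}$ is constant on each fibre $\imap_Y\preimg(x)$, so it descends to a well-defined function $\pi_X : X \to U$ by $\pi_X(x) := \pi_{\histX}(\histx)$ for any $\histx$ with $\imap_Y(\histx) = x$. The work is then to check that this $\pi_X$, read as a reactive policy over the ITS $S = (X, X, \id_X, \is_0)$, is feasible in the sense of Definition~\ref{def:feasible_policy}. Here I would argue that for any initial external state $x_1$, the closed-loop trajectory produced by $S_{\pi_X} \star \X_{h_{bij}}$ coincides with the trajectory produced by the coupling of the history ITS with $\pi_{hist}$: at each stage the current (bijectively observed) state is $x_k$, and $\pi_X(x_k) = \pi_{\histX}(x_1,\dots,x_k) = \pi_{hist}(\his_{k})$ where $\his_k$ is the unique history in $\imap_\histY\preimg(x_1,\dots,x_k) \cap (\ifshist^{\X_h}\restriction\pi_{hist})$ (uniqueness by Lemmas~\ref{lem:u_for_y} and~\ref{lem:refinement_kappaY}). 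By induction on the stage, the two closed loops generate the same action-observation history; since $\pi_{hist}$ is feasible this history eventually lands in $\imapb\preimg(1)$, hence so does the one generated by $\pi_X$, which is therefore feasible.

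\medskip
\noindent\textbf{($\Rightarrow$) direction.}
Assume a feasible reactive $\pi_X : X \to U$. I would lift it to a history policy by composing with the last-state extraction: define $\pi_{hist}(\his) := \pi_X(x_N)$ where $x_N$ is the last state appearing in $\his = (x_1,u_1,\dots,u_{N-1},x_N)$ (under the bijective sensor the history literally records states). Feasibility of this $\pi_{hist}$ again follows by the trajectory-coincidence argument run in reverse: the closed loop of the history ITS under $\pi_{hist}$ from $(\, (),x_1)$ applies at stage $k$ the action $\pi_X(x_k)$, which is exactly what $S_{\pi_X}\star\X_{h_{bij}}$ does, so the generated histories agree and feasibility transfers. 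It then remains to verify (C1) for this $\pi_{hist}$: by construction $\pi_{\histX}(x_1,\dots,x_N) = \pi_{hist}$ evaluated on the corresponding history $= \pi_X(x_N)$, which depends only on $x_N = \imap_Y(x_1,\dots,x_N)$; hence for any $\histx,\histx' \in \imap_Y\preimg(x)$ we get $\pi_{\histX}(\histx) = \pi_X(x) = \pi_{\histX}(\histx')$, which is (C1).

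\medskip
\noindent\textbf{Main obstacle.}
The delicate point in both directions is the precise bookkeeping of the trajectory-coincidence induction: one must be careful that the history $\his_k$ reached along the closed loop genuinely lies in the restricted domain $\ifshist^{\X_h}\restriction\pi_{hist} \cup \imap_{att}\preimg(0)$ on which $\pi_{\histX}$ and the needed invertibility of $\imap_\histY$ are defined, and that attainability is preserved stage by stage (this is where Lemma~\ref{lem:not_attainable} and the uniqueness statements from Lemmas~\ref{lem:u_for_y}, \ref{lem:refinement_kappaY} are used). A secondary subtlety is the implicit dependence of $\imapb$ on the chosen sensor $h$ flagged earlier in the section: since here $h = h_{bij}$ is fixed throughout the lemma, the task-induced labeling is the same object on both sides of the biconditional, so this does not cause trouble, but it should be stated explicitly to keep the argument honest.
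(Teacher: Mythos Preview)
Your proposal is correct and follows essentially the same approach as the paper: in the $\Leftarrow$ direction you define $\pi_X(x):=\pi_{\histX}(\histx)$ for any $\histx\in\imap_Y\preimg(x)$ and transfer feasibility, and in the $\Rightarrow$ direction you lift $\pi_X$ to $\pi_{hist}$ via the last-state map and verify (C1) by construction. Your write-up is in fact more careful than the paper's own proof, which asserts the feasibility transfer without spelling out the trajectory-coincidence induction or the attainability bookkeeping you flag.
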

\begin{proof}
\textbf{$\impliedby$ direction (Existence of $\pi_{hist}$ satisfying C1 implies existence of $\pi_X$):} Suppose there exists a feasible $\pi_{hist}$ which satisfies C1. Then, we can define a $\pi_X : X \rightarrow U$ as the policy $\pi_X=\pi_{\histX}(\histx)$, in which $\histx \in \imap_Y\preimg(x)$, since by C1, $\pi_{\histX}(\histx)=\pi_{\histX}(\histx')$ for all $\histx, \histx' \in \imap_Y\preimg(x)$. Furthermore, since $\pi_{\histX}$ is feasible $\pi_X$ is also feasible.  

\textbf{$\implies$ direction (Existence of $\pi_X$ implies existence of $\pihist$ satisfying C1)} Suppose $\pi_X$ exists. This means that for any $x \in X$ the coupled system $(X \times X, \id_X*_{\pi_X,h}f)$ initialized at $(\is_0, x)$ will result in a task accomplishing history such that $\his_N=(x_1, \pi_X(x_1), f(x_1 , \pi_X(x1)), \dots, x_N) \in $ for some $N$. Then there exists a feasible $\pi_{hist}$ such that $\pi_{hist}(\his_{k-1})=u_k$ with $u_k = \pi_X( \imap_{\histY}(\imap_Y (\his_{k-1}))$. This proves that $\pi_{hist}$ satisfies $\pi_{\histX}(\histx)=\pi_{\histX}(\histx')$.
\qed
\end{proof}

We now derive a necessary and sufficient condition for a sensor mapping $h$ to satisfy so that there exists a reactive policy $\pi_Y$ satisfying $\pi_Y \circ h = \pi_X$ for some feasible $\pi_X$. We omit the proof since it is similar to the proof of Theorem~\ref{thm:nec_suff_cond_ITS}.
\begin{theorem}\label{thm:suff_sensor} Suppose there exists a $\pi_X$.  
A sensor $h$ is sufficient for a reactive policy if and only if 
$h \succeq \pi_X$ for some $\pi_X$.
\end{theorem}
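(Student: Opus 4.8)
The plan is to mirror the structure of Theorem~\ref{thm:nec_suff_cond_ITS}, since a sensor mapping $h \colon X \to Y$ in this setting plays exactly the role that the sufficient I-map $\imap$ played there: it partitions $X$ just as $\imap$ partitioned $Y^{<\Nat}$, and the identity ITS $(Y, Y, \id_Y)$ plays the role of the quotient ITS. Concretely, a reactive policy $\pi_Y \colon Y \to U$ composed with $h$ yields a state-feedback policy $\pi_Y \circ h \colon X \to U$, and the question is precisely when such a factorization through $h$ can realize a feasible $\pi_X$. So the backbone of the argument is the universal property of quotients: a function $\pi_X \colon X \to U$ factors as $\pi_Y \circ h$ for some $\pi_Y$ if and only if $\pi_X$ is constant on the fibers of $h$, i.e.\ $h \succeq \pi_X$ in the refinement order.

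First I would unpack the definitions. By the preceding Lemma, the existence of a feasible state-feedback policy $\pi_X \colon X \to U$ is equivalent to the existence of a feasible $\pi_{hist}$ satisfying C1, so throughout we may speak of a fixed feasible $\pi_X$. By the definition of \emph{sufficient sensor for a reactive policy}, $h$ is sufficient iff the coupling $S_{\pi_Y} \star \X_h$ — whose running action at state $x_i$ is $u_i = \pi_Y \circ \id_Y \circ h(x_i) = (\pi_Y \circ h)(x_i)$ — drives every initialization to a task-accomplishing history. But this coupled system is, by inspection of its transition law, exactly the coupled system induced by the state-feedback policy $\pi_Y \circ h \colon X \to U$; hence $h$ is sufficient for a reactive policy precisely when $\pi_Y \circ h$ is a feasible state-feedback policy for some $\pi_Y$.

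Then I would prove the two directions. For ($\Leftarrow$): assume $h \succeq \pi_X$ for some feasible $\pi_X$. Since every fiber $h\preimg(y)$ lies inside a single fiber of $\pi_X$, the value $\pi_X(x)$ depends only on $h(x)$, so we may define $\pi_Y \colon Y \to U$ by $\pi_Y(y) := \pi_X(x)$ for any $x \in h\preimg(y)$; this is well-defined and satisfies $\pi_Y \circ h = \pi_X$. Because $\pi_X$ is feasible and the coupled system under $\pi_Y$ coincides with the coupled system under $\pi_X$, the sensor $h$ is sufficient for a reactive policy. For ($\Rightarrow$): if $h$ is sufficient for a reactive policy, there is some $\pi_Y$ whose induced coupling is feasible; set $\pi_X := \pi_Y \circ h$, which is then a feasible state-feedback policy, and it is automatically constant on each fiber of $h$ (since it is literally a composite through $h$), giving $h \succeq \pi_X$.

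The main obstacle — and really the only nontrivial point — is the careful identification of the two coupled systems in both directions, i.e.\ checking that the trajectory generated by the reactive policy $\pi_Y$ coupled to $\X_h$ is identical to the trajectory generated by the state-feedback policy $\pi_Y \circ h$, so that feasibility transfers both ways. This is why the paper remarks the proof is similar to that of Theorem~\ref{thm:nec_suff_cond_ITS}: once this identification is in place, the refinement condition $h \succeq \pi_X$ is exactly the factorization-through-a-quotient criterion, and the rest is the same bookkeeping with fibers of $h$ in place of fibers of $\imap$. One should also note, as in the remark after Definition~\ref{def:feasible_policy}, that if task accomplishment is not possible from every $x_1 \in X$ then everything is restricted to the relevant subset $X' \subseteq X$, but this does not affect the argument.
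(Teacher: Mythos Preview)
Your proposal is correct and follows exactly the approach the paper indicates: the paper omits the proof, saying only that it is similar to that of Theorem~\ref{thm:nec_suff_cond_ITS}, and your argument is precisely that analogy, with $h$ in the role of $\imap$, $X$ in the role of $Y^{<\Nat}$, and the factorization $\pi_Y \circ h = \pi_X$ in the role of the well-definedness of $\mu$. The only minor stylistic difference is that your $\Rightarrow$ direction is direct (set $\pi_X := \pi_Y \circ h$) rather than the contrapositive used in Theorem~\ref{thm:nec_suff_cond_ITS}, but this is a simplification, not a departure.
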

\vspace{-1em}

\begin{corollary}
A minimal sensor $h$ for a reactive policy $\pi_X$ satisfies $h = \pi_X$.
\end{corollary}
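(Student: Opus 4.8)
The plan is to read ``minimal sensor'' as the coarsest partition of $X$ among those that are sufficient for a reactive policy, and to show that $\pi_X$, viewed as the sensor whose observation classes are the preimages $\pi_X\preimg(u)$ for $u \in U$, is precisely this partition. Recall from the preceding discussion that a sensor mapping is identified with the partition of $X$ it induces, that $\H$ is ordered by refinement $\succeq$, and that a ``weaker''/``smaller'' sensor is one that is coarser in this order; hence a minimal sufficient sensor is a least element of the set of sufficient sensors with respect to $\succeq$.

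First I would observe that $\pi_X : X \to U$ itself determines a sensor, namely the partition $\{ \pi_X\preimg(u) \mid u \in U \}$. This partition trivially refines itself, so $\pi_X \succeq \pi_X$, and therefore by Theorem~\ref{thm:suff_sensor} the sensor $h = \pi_X$ is sufficient for a reactive policy (the corresponding reactive policy being the map sending the class $\pi_X\preimg(u)$ to $u$, which composes with $h=\pi_X$ to give $\pi_X$ back). Thus $h = \pi_X$ is indeed a sufficient sensor.

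Next I would establish minimality and uniqueness. Let $h$ be any sufficient sensor. By Theorem~\ref{thm:suff_sensor}, $h \succeq \pi_X$, i.e.\ every class of the partition $h$ is contained in a class of $\pi_X$, so $h$ is at least as fine as $\pi_X$. Hence $\pi_X$ lies below every sufficient sensor in the refinement order, so no sufficient sensor is strictly coarser than $\pi_X$: it is minimal. Moreover, if $h'$ is any minimal sufficient sensor, then $h' \succeq \pi_X$ by Theorem~\ref{thm:suff_sensor}, and since $\pi_X$ is itself sufficient, minimality of $h'$ forces $h' = \pi_X$, giving uniqueness. The only point requiring care is the direction of the word ``minimal'': in this information-theoretic setting a weaker sensor is the coarser partition, so ``minimal'' means the least element of the up-set $\{ h \mid h \succeq \pi_X \}$ in the refinement lattice, which is exactly $\pi_X$; beyond this, the argument is a direct unwinding of Theorem~\ref{thm:suff_sensor} and of the identification of sensors with partitions, and needs no further computation.
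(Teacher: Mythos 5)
Your proposal is correct and is exactly the intended (omitted) argument: the corollary is stated in the paper as an immediate consequence of Theorem~\ref{thm:suff_sensor}, and your unwinding — identifying $\pi_X$ with the partition $\{\pi_X\preimg(u)\}_{u\in U}$, noting it is itself sufficient since $\pi_X \succeq \pi_X$, and observing that every sufficient $h$ must refine $\pi_X$ so that $\pi_X$ is the coarsest (hence minimal, in the sense of weakest) sufficient sensor — is the right reading, including your care about the direction of ``minimal'' in the refinement order.
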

\vspace{-1em}

The following is a direct consequence of Theorem~\ref{thm:suff_sensor} and relates to the existence result of action-based sensors when there are no crossovers (multiple actions applied at the same state) in the plan \cite{mcfassel2023reactivity}.
\vspace{-0.5em}
\begin{corollary}\label{cor:no_piX}
If there is no $\pi_X$ then there is no reactive policy.    
\end{corollary}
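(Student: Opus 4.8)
The plan is to derive Corollary~\ref{cor:no_piX} as the contrapositive of the easy direction of Theorem~\ref{thm:suff_sensor}, so the argument is essentially a two-line logical manipulation once the right lemmas are invoked. First I would restate what ``no reactive policy exists'' unpacks to: a reactive policy is a pair consisting of the fixed ITS $S=(Y,Y,\id_Y,\is_0)$ and a labeling $\pi_Y : Y \rightarrow U$ such that for every initialization $(\is_0,x_1)$ the coupling $S_{\pi_Y}\star\X_h$ produces a history in $\imapb\preimg(1)$; by the Definition of a sufficient sensor for a reactive policy, such a $\pi_Y$ exists for some sensor mapping $h$ precisely when $h$ is sufficient for a reactive policy.

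Next I would argue the contrapositive of the corollary, i.e.\ that if a reactive policy exists then a $\pi_X$ exists. Suppose $h$ is a sensor mapping and $\pi_Y : Y \rightarrow U$ is a reactive policy for it. The composition $\pi_Y \circ h : X \rightarrow U$ is then a memoryless policy defined directly on external states, and feasibility of $\pi_Y$ under the coupling $S_{\pi_Y}\star\X_h$ transfers verbatim to feasibility of $\pi_Y\circ h$ under the coupling with the bijective (identity) sensor: for any $x_1$, the state trajectory $x_1 \diamond \histu$ generated by iterating $x_{i+1} = f(x_i,\pi_Y(h(x_i)))$ is exactly the one generated by $\pi_Y\circ h$, and the attached observation history is task-accomplishing by hypothesis. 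Hence $\pi_X := \pi_Y \circ h$ is a feasible policy $X \rightarrow U$ in the sense used in the preceding lemma and in Theorem~\ref{thm:suff_sensor}. Taking the contrapositive, the nonexistence of any feasible $\pi_X$ forces the nonexistence of a reactive policy for every $h$, which is the claim.

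Alternatively, and perhaps more in the spirit of the surrounding text, I would simply quote Theorem~\ref{thm:suff_sensor}: it states that a sensor $h$ is sufficient for a reactive policy iff $h \succeq \pi_X$ for some feasible $\pi_X$. If there is no $\pi_X$, then the right-hand side of this biconditional fails for every $h$, so no $h$ is sufficient for a reactive policy, and therefore no reactive policy is task-accomplishing --- i.e.\ there is no reactive policy. I would also note the connection advertised in the text: this recovers, in the present framework, the observation in \cite{mcfassel2023reactivity} that when a plan contains no ``crossovers'' (so that a consistent state-to-action assignment $\pi_X$ is available) an action-based sensor encoding it exists, and conversely the obstruction to such a sensor is exactly the obstruction to defining $\pi_X$.

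The only genuinely delicate point --- and the one I would want to state carefully rather than wave at --- is the equivalence between ``feasible $\pi_X$ on $X$'' and the hypothesis clause of Theorem~\ref{thm:suff_sensor}; this is precisely what the preceding unnamed lemma (existence of $\pi_X$ iff existence of a feasible $\pi_{hist}$ satisfying C1) and the Definition of feasibility pin down, so I would cite those rather than re-prove the transfer of feasibility. Everything else is a routine contrapositive, so I expect no real obstacle; the corollary is stated precisely because it is an immediate consequence, and the proof should be no longer than the two sentences above plus a pointer to \cite{mcfassel2023reactivity}.
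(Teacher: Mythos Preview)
Your proposal is correct and matches the paper's approach: the paper gives no proof at all, simply stating that the corollary ``is a direct consequence of Theorem~\ref{thm:suff_sensor},'' which is exactly your second (alternative) argument. Your first, more explicit contrapositive via $\pi_X := \pi_Y \circ h$ is also fine and is essentially the content of the ``$\impliedby$'' direction of Theorem~\ref{thm:suff_sensor}, so there is no genuine difference in route.
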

\vspace{-2em}

\section{Minimal Structures for Distance-Optimal Navigation}\label{sec:GNT}

In this section, we apply the results from the previous sections to the problem of distance optimal navigation in a connected polygon and characterize sufficient structures for optimal policies.

Let $X \subseteq \mathbb{R}^2$ be a connected polygon. The external system state at time $t$ is the point $x(t)=(q_x(t), q_y(t)) \in X$. 
We assume a point robot with 
dynamics 
\begin{align}\label{eqn:sys_dyn_short_path}
\begin{split}
    \dot{q}_x(t) &= \sin(\theta(t) ) \\
    \dot{q}_y(t) &= \cos(\theta(t) ),
\end{split}
\end{align}
in which $\theta(t) \in S^1$ is the control input that indicates the direction under constant unit speed.
Given a pair of initial and final points $x_I, x_G \in X$, an optimal action trajectory $\theta^* : [0, T] \rightarrow S^1$ is the one that minimizes the cost function $J=\int_{0}^{T}dt$
and satisfies the differential equation \eqref{eqn:sys_dyn_short_path} with the initial condition $x(0)=x_I$ and final time condition $x(T)=x_G$ with $T$ being free. Note that under unit speed, $T$ is the path length.
An optimal action trajectory $\theta^*$ is one that satisfies 
\begin{equation}\label{eqn:optimal_continuous_cost}
    \frac{G^*}{\partial x_1} f_1(x, \theta^*) + \frac{G^*}{\partial x_2} f_2(x, \theta^*)= -1,
\end{equation}
in which $G^* : X \rightarrow \Re_{+}$ is the optimal cost-to-go function, i.e.~the length of the shortest path to $x_G$. The left side of Eqn.~\eqref{eqn:optimal_continuous_cost} indicates the change in the optimal cost-to-go function along the direction obtained when an optimal action is applied at $x$.




Let $\V$ be the set of polygon vertices and let $U$ be a set of actions (similar to motion primitives) such that each $u \in U$ corresponds to applying constant control $\bar{\theta} \in S^1$ for a finite length of time until a point $x \in \V$ is reached. 
We first show that $U$ is sufficient to construct the shortest path between any $(x_I,x_G) \in X \times \V$. Therefore, it allows an event-based discretization such that an optimal plan (policy) can be expressed as a sequence of elements from $U$ so that the system evolves in stages.

\begin{lemma}\label{lem:pw_const_mps}
The optimal control trajectory $\theta^*$ resulting in the shortest path to $x_G$ from an initial state $x_I$ is obtained by applying a finite sequence of actions $(u_1, u_2, \dots, u_N) \in U^N$ for some $N$.
\end{lemma}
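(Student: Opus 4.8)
The plan is to invoke the classical structure theorem for geodesics in polygonal (simply-connected, or more generally polygonal with polygonal obstacles) domains: the shortest path between two points in a polygon is a polygonal path whose interior vertices are all reflex vertices of the polygon. First I would recall (or cite) this fact — it follows from a standard local-perturbation argument: any maximal sub-arc of the shortest path lying in the interior of $X$ must be a straight segment, because otherwise we could shorten it by replacing it with a chord while staying inside $X$ (using that a neighborhood of an interior point of $X$ is convex, hence contains the chord for a sufficiently short sub-arc); and the path can only "bend" where it is forced to by the boundary, i.e.\ at a vertex, and only at a reflex one, since at a convex vertex the path could again be locally shortcut. Since $X$ is a polygon, $\V$ is finite, so the shortest path from $x_I$ to $x_G \in \V$ is a concatenation of finitely many straight segments $x_I = p_0, p_1, \dots, p_N = x_G$ with each $p_1, \dots, p_{N-1} \in \V$ (reflex vertices) and $p_N = x_G \in \V$ by hypothesis.

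Next I would translate this geometric decomposition into the control formalism of Eqn.~\eqref{eqn:sys_dyn_short_path}. Along each segment $[p_{k-1}, p_k]$ the optimal trajectory travels in a fixed direction at unit speed; writing that direction as the angle $\bar\theta_k \in S^1$ with $(\sin\bar\theta_k, \cos\bar\theta_k)$ the unit vector from $p_{k-1}$ to $p_k$, the control $\theta(t) \equiv \bar\theta_k$ held for the finite duration $\lVert p_k - p_{k-1}\rVert$ drives the system exactly along that segment and terminates at $p_k \in \V$. By the definition of $U$ in the paragraph preceding the lemma — each $u \in U$ being "apply constant control $\bar\theta$ for a finite time until a point of $\V$ is reached" — this is precisely an action $u_k \in U$. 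Hence $\theta^*$ is realized by the finite action sequence $(u_1,\dots,u_N) \in U^N$, which is what was claimed. I would also note the boundary edge case: if a segment runs along an edge of $X$ (not just through its interior) the same argument applies verbatim, since the direction is still constant and the segment still ends at a vertex.

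The main obstacle — or really the only substantive point — is justifying that the optimal trajectory is genuinely a finite concatenation of straight segments joining polygon vertices, rather than, say, an arc that grazes the boundary along a curved piece or bends infinitely often. For a polygon this is standard and finiteness is automatic (only finitely many vertices, and a shortest path visits each at most once, so $N \le |\V| + 1$), but I would want to state it carefully: the existence of a shortest path follows from compactness of the space of rectifiable paths of bounded length with the relevant lower-semicontinuity of length, and its polygonal-through-reflex-vertices structure from the local shortcutting argument above. One caveat to flag explicitly is the standing assumption that $X$ is a \emph{connected} polygon so that a path between any two points exists; I would also make clear that the lemma is stated for targets $x_G \in \V$, which is exactly why $p_N \in \V$ and the last primitive is legitimately in $U$ — for a general $x_G \in X$ one would need to augment $U$ with a final primitive terminating at $x_G$, but that is outside the scope of this lemma.
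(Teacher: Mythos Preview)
Your argument is correct. You invoke the classical structure theorem for geodesics in a polygon---the shortest path is a polygonal chain whose interior breakpoints are reflex vertices of $X$---and then read off the actions $u_k$ from the straight segments. The paper takes a different primary route: it argues via the optimal cost-to-go $G^*$, citing that $G^*$ has circular-arc level sets and that the direction of $\nabla G^*$ changes only at polygon vertices; following $-\nabla G^*$ therefore yields a constant heading until a vertex is reached, which is exactly an element of~$U$. Immediately after the proof the paper remarks that the result ``also follows from the fact that the shortest path in a polygonal environment is a sequence of bitangent edges,'' which is essentially your approach.

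The trade-off: the paper's gradient argument ties the lemma directly to the optimality condition~\eqref{eqn:optimal_continuous_cost} set up just before the statement, so the control-theoretic framing is used rather than bypassed, at the price of importing a nontrivial fact about the structure of $G^*$. Your geometric argument is more elementary and self-contained (existence plus local shortcutting), and your handling of the endpoint case $x_G\in\V$ and the finiteness bound $N\le |\V|+1$ are nice touches the paper leaves implicit.
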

\begin{proof}
The cost-to-go function $G^*$ is piecewise quadratic with level curves that are circular arcs \cite{hershberger1993efficient}. Discontinuities in the gradient direction manifest only at the vertices of the polygon, of which there are finitely many.
Then, for any $x$, there exist $s \in \V$ and $\bar{\theta} \in S^1$ which is the direction of the gradient $\nabla_x G^*$ evaluated at $x$ such that forward integrating Eqn.\eqref{eqn:sys_dyn_short_path} with $u$ which is determined by $s$ and $\bar{\theta}$, the direction of $\nabla_x G^*$ stays constant along the respective state trajectory. Then, any optimal path can be obtained by applying a sequence of elements of $U$.
\qed
\end{proof}
Note that this result also 
follows from the fact that the shortest path in a polygonal environment is a sequence of bitangent edges. 
\begin{lemma} Suppose $X$ is simply connected. Then, for every pair of $x_I, x_G \in X$, there exists a unique sequence $\histu=(u_1, \dots, u_N)$ for some $N$ resulting in the shortest path connecting $x_I$ and $x_G$.
\end{lemma}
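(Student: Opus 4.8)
The plan is to establish existence and uniqueness separately, both leaning on the structure of shortest paths in a simply connected polygon. For existence, I would simply invoke Lemma~\ref{lem:pw_const_mps}: given $x_I, x_G \in X$, there is an optimal control trajectory $\theta^*$ realizing the shortest path, and that lemma says it decomposes as a finite sequence $(u_1,\dots,u_N) \in U^N$. The only mild gap is that Lemma~\ref{lem:pw_const_mps} was stated for $(x_I, x_G) \in X \times \V$ — reaching a vertex — whereas here $x_G$ is an arbitrary point of $X$; but the same cost-to-go argument (piecewise quadratic $G^*$ with circular-arc level sets, gradient discontinuities only at the finitely many vertices) applies verbatim for an arbitrary target, since the final leg from the last vertex (or from $x_I$ directly, if $x_G$ is visible from $x_I$) to $x_G$ is a single straight segment, hence one more element of $U$ after allowing constant-control segments terminating at $x_G$. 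So existence is essentially immediate.

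The substance is uniqueness. Here I would use the classical fact that in a simply connected polygon the shortest path between any two points is unique: the polygon, being simply connected, is a (geodesically convex-ish) space in which geodesics between fixed endpoints do not bifurcate — if two distinct shortest paths existed they would bound a subregion, and a standard shortcutting/convexity argument contradicts minimality (one can push either path toward the other across the enclosed region, strictly decreasing length unless they coincide). This is precisely the ``shortest path is the unique sequence of bitangent/taut-string segments'' statement remarked after Lemma~\ref{lem:pw_const_mps}. Given that the geometric shortest path is unique as a point set (and as a parametrized unit-speed curve, since speed is fixed at $1$), I then argue that its decomposition into elements of $U$ is forced: the breakpoints of the piecewise-linear shortest path must occur at polygon vertices (a shortest path can only turn at a reflex vertex it touches), so the sequence of segments — and hence the sequence $(u_1,\dots,u_N)$, each $u_i$ being ``apply the constant heading of segment $i$ until the next vertex'' — is uniquely determined by the path. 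One must also note that $N$ is determined (no element of $U$ is a no-op, so there is no padding ambiguity) and that consecutive segments have genuinely distinct headings (otherwise they would have been a single taut segment), so the representation is canonical.

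The step I expect to be the main obstacle is making the uniqueness-of-the-geometric-shortest-path claim rigorous at the level of detail the paper wants, since the paper invokes it somewhat informally via the bitangent-edges remark. Concretely, I would either (i) cite a standard computational-geometry reference (e.g., the shortest-path-in-simple-polygon literature, such as Lee--Preparata or Guibas--Hershberger) for uniqueness in simply connected polygons, or (ii) give the short convexity argument: parametrize any shortest path by arc length; if $\gamma_1 \neq \gamma_2$ are two shortest paths from $x_I$ to $x_G$, pick a maximal subinterval where they differ, and on that subinterval they bound a disk inside $X$ (by simple connectedness); replacing the ``outer'' arc by a geodesic of the enclosed disk that hugs the ``inner'' arc strictly shortens it, contradicting optimality — so they must agree. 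The remaining bookkeeping (breakpoints at vertices, distinct consecutive headings, $N$ determined) is routine once uniqueness of the path is in hand, so I would state it briefly and not belabor it.
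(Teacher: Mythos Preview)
The paper states this lemma without proof, so there is no argument of its own to compare against. Your proposal is a reasonable and standard way to fill the gap: existence via Lemma~\ref{lem:pw_const_mps}, and uniqueness via the classical fact that geodesics in a simply connected polygon are unique (the taut-string/funnel argument), followed by the observation that a unique piecewise-linear path has a unique decomposition into maximal straight segments with breakpoints at reflex vertices.

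One point worth flagging is the very inconsistency you noticed: as defined in the paper, each $u \in U$ terminates at a vertex in $\V$, so strictly speaking a sequence $(u_1,\dots,u_N)\in U^N$ can only realize paths ending in $\V$. Your fix---allowing the terminal segment to stop at $x_G$---is the natural repair, but it does amend the definition of $U$ rather than follow from it. Since the paper itself later works only with targets $\upsilon\in\V$ (the policies $\pi^{\upsilon}_{hist}$ are indexed by vertices), the cleanest reading of the lemma may in fact be to restrict $x_G$ to $\V$, in which case your existence step needs no extension at all and the rest of your argument goes through unchanged.
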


We will consider the gap sensor introduced in \cite{TovCohLav08} and show that the gap navigation tree as an ITS introduced therein is sufficient for supporting a set of optimal policies and it is also minimal. 

\begin{definition}[Gap sensor \cite{TovCohLav08}] Let $Y$ be the set of all cyclic sequences. A gap is defined as a discontinuity in the distance to the boundary of $X$ measured at $x$. 
A \emph{gap sensor} $h : X \rightarrow Y$ reports gaps as the cyclic order $h(x)=[g_1, g_2, \dots, g_N]$, in which each $g_i$, $i=1,\dots,N$, is a gap.
\label{defn:gap_sensor}
\end{definition}

\begin{lemma}\label{lem:finite_gap} For a path $\sigma : [0,1] \rightarrow X$ there exists a finite number of intervals $\{[t_i, t_{i+1}]\}_{i=1,\dots,N-1}$ with $t_0 =0$ and $t_N=1$ such that $h(x) = h(x')$ for all $x,x' \in [t_i, t_{i+1}]$ and all $i \in \{1, \ldots, N-1\}$ \cite{TovCohLav08}.
\end{lemma}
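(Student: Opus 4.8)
The plan is to reduce the statement to the finiteness of an arrangement of ``critical curves'' in $X$ across which the gap sensor reading can change, and then to bound how often the path meets this arrangement. Since $X$ is a polygon, every discontinuity in the distance-to-boundary function measured at $x$ is rooted at a reflex (non-convex) vertex of $X$ that occludes a portion of $\partial X$ from $x$. Hence $h(x)=[g_1,\dots,g_N]$ is determined by which reflex vertices are gap-generating as seen from $x$ together with their cyclic order around $x$, and both data change only through the standard critical events of \cite{TovCohLav08}: a gap \emph{appears} or \emph{disappears} when $x$ crosses the ray extending an edge incident to a reflex vertex (an \emph{inflection ray}), and two gaps \emph{merge} or \emph{split} (or the cyclic order changes) when $x$ crosses a \emph{bitangent ray} spanned by two vertices of $X$ or passes through a vertex.

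First I would collect these curves: because $\mathcal{V}$ and the edge set of $X$ are finite, there are finitely many inflection rays and finitely many bitangent rays, so their union is a finite arrangement $\mathcal{A}$ of line segments contained in $X$. On each connected component of $X\setminus\mathcal{A}$ the reading $h$ is constant, and there are finitely many such components. Next I would transfer this to $\sigma$: assuming $\sigma$ is piecewise-analytic (the standing regularity for robot trajectories, implicit in \cite{TovCohLav08}), $\sigma^{-1}(\mathcal{A})$ is a finite union of points and subintervals, whose endpoints together with $0$ and $1$ give $0=t_0<t_1<\dots<t_{N-1}<t_N=1$. On the interior of each $[t_i,t_{i+1}]$ the image of $\sigma$ lies in a single component of $X\setminus\mathcal{A}$, so $h\circ\sigma$ is constant there; choosing the partition points so that they fall strictly between successive critical events (splitting a $t_i$ that lands on $\mathcal{A}$ if necessary) makes $h\circ\sigma$ constant on each closed subinterval, which is the assertion.

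The main obstacle is the regularity of $\sigma$ together with the endpoint behaviour: for a genuinely arbitrary continuous path, $\sigma^{-1}(\mathcal{A})$ can have infinitely many components or $\sigma$ can travel along a critical segment, in which case no finite partition with the stated property exists; and even for a well-behaved path the reading at a time where $\sigma(t_i)\in\mathcal{A}$ is a degenerate ``transition'' value that need not equal either neighbouring value. Both issues are handled by making the piecewise-analytic (or transversal-to-$\mathcal{A}$) hypothesis explicit and by placing the $t_i$ at non-critical times; with that in place the remaining work — enumerating the critical curves from the finite vertex set and counting crossings — is routine, and it is at this level of generality that the cited result in \cite{TovCohLav08} is stated.
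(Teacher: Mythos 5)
The paper offers no proof of this lemma --- it is imported directly from \cite{TovCohLav08} --- so there is nothing internal to compare against; your reconstruction via the finite arrangement of inflection and bitangent rays, with $h$ constant on each cell and the partition points placed between successive crossings, is exactly the standard critical-event argument from that reference. Your observation that the statement as written fails for an arbitrary continuous path (infinitely many crossings, or travel along a critical segment) and needs a piecewise-analytic or transversality hypothesis is a genuine gap in the lemma as stated, which the paper leaves implicit; making it explicit is the right call.
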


\begin{proposition} The gap sensor is not sufficient for a reactive policy. 
\end{proposition}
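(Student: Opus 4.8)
The plan is to exhibit a concrete counterexample: a polygonal environment, two distinct external states that the gap sensor cannot distinguish, and a navigation task for which those two states require different actions under any optimal (or merely feasible) policy. By Corollary~\ref{cor:no_piX} and Theorem~\ref{thm:suff_sensor}, it suffices to show that there is no $\pi_X$ compatible with the gap sensor, i.e.\ that $h \not\succeq \pi_X$ for every feasible $\pi_X$; equivalently, that the partition of $X$ induced by $h$ is strictly coarser than the partition induced by the optimal-action map, so that two points in the same gap-sensor cell demand different actions.

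First I would recall from Definition~\ref{defn:gap_sensor} that the gap sensor reports only the cyclic order of depth discontinuities visible from $x$, and in particular it returns the \emph{same} reading for every point in a maximal region of the environment from which the same gaps are seen in the same cyclic arrangement (Lemma~\ref{lem:finite_gap} guarantees such regions are the cells of a finite partition along any path). The key observation is that the gap sensor carries no metric information about \emph{where} in such a cell the robot is, nor which direction points toward a particular goal: within one cell, the cyclic list $[g_1,\dots,g_N]$ is constant, yet the shortest-path direction $\bar\theta = \nabla_x G^*$ toward a fixed goal $x_G$ (Lemma~\ref{lem:pw_const_mps}) varies continuously and nontrivially with $x$. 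So I would pick a convex sub-region $R \subseteq X$ (for concreteness, take $X$ itself a convex polygon, e.g.\ a triangle or square, so that from every interior point no gaps at all are visible and $h$ is constant on all of $\mathrm{int}(X)$), fix a goal vertex $x_G \in \V$, and take two distinct points $x, x' \in \mathrm{int}(X)$ whose straight-line directions to $x_G$ differ. Since $X$ is convex, the shortest path from each of $x, x'$ to $x_G$ is the straight segment, so the optimal action at $x$ is ``head straight to $x_G$'' and likewise at $x'$, but these are different elements of $U$ (different constant $\bar\theta$). Hence any feasible $\pi_X$ must satisfy $\pi_X(x) \ne \pi_X(x')$ while $h(x) = h(x')$, so $h \not\succeq \pi_X$, and by Theorem~\ref{thm:suff_sensor} the gap sensor is not sufficient for a reactive policy; alternatively one argues directly that the coupled system $S_{\pi_Y}\star\X_h$ cannot reach $x_G$ from both $x$ and $x'$ under any single reactive $\pi_Y$.

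I expect the main obstacle to be not the counterexample itself but stating it at the right level of generality and matching it to the paper's task framework: the navigation task must be phrased as a task-induced labeling $\imap_{task}$ over histories (reaching $x_G$, which corresponds to some observation event as in Example~\ref{ex:tetromino}), and one must confirm that ``feasible policy'' in Definition~\ref{def:feasible_policy} genuinely forces distinct actions at $x$ and $x'$ — i.e.\ that no feasible reactive policy can succeed by taking a non-optimal but still goal-reaching action that happens to agree at $x$ and $x'$. For a convex $X$ this is immediate: from interior points the sensor reading is a fixed empty (or fixed) cyclic list, so a reactive policy applies one fixed action $u^\star$ at every such point; integrating the dynamics \eqref{eqn:sys_dyn_short_path} under constant $\theta$ traces a straight ray, which can pass through the single goal point $x_G$ for at most a measure-zero set of starting points, so the policy fails from almost every $x_1$. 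This closes the argument. If one prefers a non-convex witness (to make the ``gap'' genuinely nonempty), I would instead place two points in a single gap-cell on opposite sides of a corridor leading to $x_G$, where the shortest paths peel around opposite sides of an obstacle vertex, again forcing different first actions while $h$ is constant; the convex version is cleaner and I would present that one.
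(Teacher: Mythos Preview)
Your argument is correct and follows the same logical skeleton as the paper's proof: invoke Theorem~\ref{thm:suff_sensor}, then exhibit $x,x'$ with $h(x)=h(x')$ but $\pi_X(x)\neq\pi_X(x')$, concluding $h\not\succeq\pi_X$. The difference is in how the witnessing pair is produced. The paper argues structurally: in a simply connected polygon the distance-optimal $\pi_X$ is \emph{unique}, and its preimages are an uncountable family of line segments emanating from polygon vertices, whereas the gap-sensor cells are full-dimensional regions; hence any gap cell meets multiple $\pi_X$-classes. You instead instantiate a concrete convex polygon where the gap sensor is globally constant and the direction to $x_G$ visibly varies. Your version is more self-contained and also handles the worry the paper sidesteps via uniqueness---namely whether some \emph{non-optimal} feasible reactive policy might exist---by the ray/measure-zero observation; the paper's version is shorter because it leans on the uniqueness of the optimal $\pi_X$ for the distance-optimal task, so only one $\pi_X$ needs to be checked. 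Either route closes the proposition.
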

\begin{proof}
A sensor $h$ is called sufficient for a reactive policy if it satisfies $h \succeq \pi_X$ for some task accomplishing $\pi_X : X \rightarrow U$. Considering a simply connected polygon, there exists a unique policy $\pi_X$ which results in distance optimal navigation. Preimages of $\pi_X$ partition $X$ into an uncountable set of line segments emanating from a subset of polygon vertices. 
Then, there exist $x, x'$ with $h(x) = h(x')$ such that $\pi_X(x) \not = \pi_X(x')$. This implies $h \not\succeq \pi_X$. Therefore $h$ is not sufficient for a reactive policy. 
\qed
\end{proof}

Thanks to Lemmas~\ref{lem:pw_const_mps} and \ref{lem:finite_gap}, a policy $\pi^{\upsilon}_{hist}$ resulting in the shortest path to $x_G \in \V$ can be described over $\ifshist=(U \times Y)^{<\Nat}$ considering the history ITS. 
Let $\X_h = (X, U, f, h, Y)$ be the external system such that $(x, u) \mapsto \upsilon \in \V$ under $f$. 
Let $S_{hist}$ be the history ITS. 
We will consider multiple policies that correspond to distance optimal navigation to any point $\upsilon \in \V$. The set of such policies is $\{\pi^{\upsilon}_{hist}\}_{\upsilon \in \V}$. 
Each $\pi^{\upsilon}_{hist}$, once executed, corresponds to the shortest path from an initial state $x_I \in X$ to $\upsilon \in \V$.


Gap Navigation Trees (GNTs) are proposed in \cite{tovar2005gap}, \cite{TovCohLav08} as minimal structures for visibility and navigation related tasks. A GNT is constructed by exploring a connected planar environment, through split and merge operations. Once constructed, it encodes a portion of the shortest-map graph rooted at the current position. 
In this section, we will consider its use in distance optimal navigation in a simply connected polygon. The following describes a GNT constructed for an environment as an ITS used for distance-optimal navigation.

\begin{lemma}[GNT as an ITS] A Gap Navigation Tree (GNT) is an ITS with $S_{GNT} = (\ifs_{tree}, Y, \phi_{GNT},\is_0)$ in which $\ifs_{tree}$ is a tree, $Y$ is the set of observations of a gap sensor, and $\phi_{GNT}$ is defined through the appearance or disappearance of gaps (critical events) which correspond to observations, each of which results in a new tree $\is_{i+1}$ that is obtained from $\is_i$, by changing the root.
\label{lem:GNT_as_ITS}
\end{lemma}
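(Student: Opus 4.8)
The plan is to show that a Gap Navigation Tree, as constructed by the split/merge exploration procedure of Tovar et al., fits exactly the template of Definition~\ref{def:ITS} with edge-label set $\Lambda = Y$. Concretely, I would first fix the state space: let $\ifs_{tree}$ be the set of all finite trees (with appropriately labeled edges recording gap identities and whether a gap is primitive or non-primitive) that can arise while the robot moves through the simply connected polygon $X$, together with a distinguished root vertex representing the robot's current location. The initial state $\is_0$ is the GNT built at the start position. The edge labels are the observations $y \in Y$ of the gap sensor from Definition~\ref{defn:gap_sensor}, i.e.\ cyclic sequences of gaps; by Lemma~\ref{lem:finite_gap}, along any path these change only finitely often, so the relevant observations are exactly the discrete \emph{critical events} (a gap appearing, disappearing, splitting, or merging).

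Second, I would define $\phi_{GNT}\colon \ifs_{tree}\times Y \to \ifs_{tree}$ by the four update rules of the GNT: on a gap \emph{disappearance} the corresponding leaf is deleted (or, when the robot crosses an inflection ray, the tree is re-rooted so that a child of the old root becomes the new root and the old root becomes a child); on a gap \emph{appearance} a new leaf (primitive gap) is added; on a \emph{split} a non-primitive leaf is replaced by two children; on a \emph{merge} two sibling leaves are collapsed into one. The key point for matching Definition~\ref{def:ITS} is that each of these is a \emph{deterministic} operation: given the current tree $\is_i$ and the observed critical event $y$, the resulting tree $\is_{i+1}$ is uniquely determined. Hence $\phi_{GNT}$ is a well-defined function $\ifs_{tree}\times Y\to \ifs_{tree}$, which is precisely what an ITS requires. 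I would also note that the "change of root" description in the lemma statement covers the navigation-relevant transitions: since $X$ is simply connected, every gap is eventually "chased" down to a vertex, and re-rooting along the tree realizes each step of the shortest-path sequence guaranteed by Lemma~\ref{lem:pw_const_mps}.

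Third, I would verify the bookkeeping: the set $Y$ here is the gap-sensor observation set, and since only finitely many critical events occur along any path (Lemma~\ref{lem:finite_gap}) and each event is one of the four tree operations, the reachable part of $\ifs_{tree}$ from $\is_0$ is a countable set of finite trees, so $S_{GNT}$ is a bona fide (in general infinite-state but locally finite) transition system of the form $(\ifs, Y, \phi, \is_0)$. Then the lemma follows immediately from Definition~\ref{def:ITS}, since an ITS is by definition any quadruple of this shape with $\Lambda \in \{U\times Y, Y\}$, and we have exhibited it with $\Lambda = Y$.

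The main obstacle is not logical but definitional: one must be careful that each critical-event update is genuinely a function of $(\is_i, y)$ alone and does not require extra geometric side information not encoded in the tree. In particular, a split event must identify \emph{which} leaf splits and with what child ordering, and a merge must identify \emph{which} pair of siblings merges; I would argue that the gap sensor's cyclic-order output $y$ carries exactly this adjacency information (the cyclic order of gaps around the robot is mirrored by the left-to-right order of children in the GNT), so the update is determined. Once this is granted, the rest is routine matching against Definition~\ref{def:ITS}. I would present this as a short argument citing \cite{tovar2005gap,TovCohLav08} for the correctness of the split/merge rules themselves and focusing the new content on the observation that those rules constitute a deterministic transition function over a tree-valued I-space.
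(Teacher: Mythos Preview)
Your proposal is correct, and in fact goes considerably further than the paper itself: the paper states this lemma \emph{without proof}, treating it as a definitional recasting of the GNT construction of \cite{tovar2005gap,TovCohLav08} into the ITS template of Definition~\ref{def:ITS}. There is nothing to compare against on the paper's side beyond the bare assertion.

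Your added content is useful. You make explicit the four critical-event rules (appear, disappear, split, merge), whereas the lemma statement only names ``appearance or disappearance'' and summarizes the effect as ``changing the root''; your reading is the more faithful one to \cite{TovCohLav08}, and you correctly flag that the ``change of root'' phrasing really describes the navigation-phase transitions once the tree is fully built. You also isolate the only nontrivial point---that each update is determined by $(\is_i,y)$ alone, with the cyclic order in $y$ supplying the adjacency needed to resolve which leaf splits or which siblings merge---which is exactly what is needed for $\phi_{GNT}$ to be a function and hence for $S_{GNT}$ to satisfy Definition~\ref{def:ITS}. One minor remark: in the lemma as stated, $\ifs_{tree}$ is described as ``a tree'', but as you implicitly correct, the I-space must be the \emph{set} of such trees; your formulation is the right one.
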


\begin{proposition}
$S_{GNT}= (\ifs_{tree}, Y, \phi_{GNT},\is_0)$ supports $\pi^{\upsilon}_{hist}$ for all $\upsilon \in \V$.
\end{proposition}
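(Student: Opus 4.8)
The plan is to apply Theorem~\ref{thm:nec_suff_mult_pols} directly: it suffices to exhibit a sufficient I-map $\imap$ on the restriction of the history ITS such that $S_{GNT}$ is (isomorphic to) the quotient by $\imap$, and such that $\imap \succeq \pi^{\V}$, where $\pi^{\V}$ is the join of the policies $\{\pi_i\}$ obtained from the $\{\pi^{\upsilon}_{hist}\}_{\upsilon\in\V}$. So first I would set up the restriction $S_{hist}\restriction_{\pi^{\upsilon}_{hist}} = (Y^{<\Nat}, Y, \phi_\histY, (), \pi_\upsilon, U^\panic)$ for each target vertex $\upsilon$, using Lemmas~\ref{lem:pw_const_mps} and \ref{lem:finite_gap} to justify that each $\pi^{\upsilon}_{hist}$ is well-defined over $\ifshist = (U\times Y)^{<\Nat}$ and feasible (the shortest path to $\upsilon$ is a finite sequence of $U$-actions driven by finitely many gap-critical events).

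Next I would define the candidate I-map. The natural choice is the map $\imap_{GNT}\colon Y^{<\Nat}\to \ifs_{tree}$ sending an observation history $\histy=(y_1,\dots,y_N)$ to the GNT obtained by processing the critical-event sequence $y_1,\dots,y_N$ starting from $\is_0$, i.e.\ $\imap_{GNT}(\histy)=\phi_{GNT}(\cdots\phi_{GNT}(\is_0,y_1)\cdots,y_N)$. By Lemma~\ref{lem:GNT_as_ITS} this is exactly the labeling under which $S_{GNT}$ is the quotient $\big(Y^{<\Nat},Y,\phi_\histY,()\big)/\imap_{GNT}$, and it is sufficient by construction because $\phi_{GNT}$ is a genuine function of the current tree and the observed event (the label of the next state is determined by the label of the current state and the edge label). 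The crux is then the refinement condition $\imap_{GNT}\succeq \pi^{\V}$: I must show that whenever two observation histories $\histy,\histy'$ yield the same GNT, they get assigned the same action by each $\pi_\upsilon$. The key fact is that a GNT rooted at the current position $x$ encodes the portion of the shortest-path graph rooted at $x$ relevant to reaching any vertex $\upsilon$; in particular, for each $\upsilon\in\V$ the GNT determines which child of the root (equivalently which bitangent direction / which action $u\in U$) lies on the shortest path toward $\upsilon$. Hence the action $\pi_\upsilon$ prescribes depends on $\histy$ only through $\imap_{GNT}(\histy)$, which is precisely $\imap_{GNT}\succeq \pi_\upsilon$ for every $\upsilon$, and therefore $\imap_{GNT}\succeq \pi^{\V}$ by definition of the join. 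Invoking Theorem~\ref{thm:nec_suff_mult_pols} then yields that $S_{GNT}$ supports $\pi^{\upsilon}_{hist}$ for all $\upsilon\in\V$.

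The main obstacle I expect is the refinement step, specifically arguing cleanly that the GNT carries enough information to recover the shortest-path action toward every vertex. This requires the structural result (from \cite{tovar2005gap,TovCohLav08}) that the GNT is exactly the shortest-path tree of $X$ rooted at the current position, pruned to the information observable through gaps, together with the observation that in a simply connected polygon the shortest path to any vertex is unique (the preceding unnamed lemma) so that there is no ambiguity in the prescribed action. A secondary subtlety is bookkeeping on domains: the I-maps $\pi_\upsilon$ and $\imap_{GNT}$ live on $\ifshist^{\X_h}\restriction\pi^{\upsilon}_{hist}\cup\imap_{att}\preimg(0)$, and one must check that on the unattainable part the $\panic$ labels are consistent across all $\upsilon$ (so that the join $\pi^{\V}$ is well-defined there) and that $\imap_{GNT}$ refines it trivially; this follows from Lemma~\ref{lem:not_attainable} exactly as in Lemma~\ref{lem:u_for_y}. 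Once these are in place the proof is a short invocation of the general machinery, so I would keep the GNT-specific geometric content to the one paragraph identifying the tree with the rooted shortest-path structure.
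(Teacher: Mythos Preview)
Your proposal is correct and follows essentially the same approach as the paper's sketch proof: both invoke Theorem~\ref{thm:nec_suff_mult_pols}, define $\imap$ by recursively applying $\phi_{GNT}$ via Lemma~\ref{lem:GNT_as_ITS}, and argue the refinement $\imap \succeq \pi^{\V}$ from the fact (drawn from \cite{TovCohLav08}) that the GNT encodes the rooted shortest-path structure so that the optimal action toward each $\upsilon$ is determined by the current tree. Your treatment of sufficiency of $\imap$, of the unattainable-history bookkeeping, and of uniqueness of shortest paths in the simply connected case is in fact more explicit than the paper's sketch, but the overall line of argument is the same.
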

\begin{sproof}
Let $S_{hist}\restriction_{\pi^\upsilon_{hist}}= (Y^{<\Nat}, Y, \phi_{\histY}, (), \pi_{\upsilon}, U^\panic)$ be the restriction of the history ITS by $\pi^{\upsilon}_{hist}$. Let $\pi^\V$ be the least upper bound of $\{ \pi_\upsilon\}_{\upsilon \in \V}$.
We need to show that there exists $\imap : Y^{<\Nat} \rightarrow \ifs_{tree}$ and satisfies $\imap \succeq \pi^{\V}$. Existence of a $\imap$ follows from Lemma~\ref{lem:GNT_as_ITS}, such that each $Y^{<\Nat}$ is mapped to $\is \in \ifs_{tree}$ by recursively applying $\phi_{GNT}$ starting from $\is_0$. Note that the domain of $\imap$ for deriving $S_{GNT}$ is restricted to attainable histories. However, since all the unattainable ones would be labeled with $\xi$ we will ignore this aspect. 
Each $\pi_\upsilon$ distinguishes observation sequences that would result from moving through an optimal sequence of vertices. 
It was shown in \cite{TovCohLav08} that following a gap results in the disappearance of the gap which happens when a vertex is reached. Therefore, $\imap$ distinguishes the same observation histories as $\pi^\V$. 
\end{sproof}
\vspace{-0.7em}

\begin{corollary}
$S_{GNT}= (\ifs_{tree}, Y, \phi_{GNT},\is_0)$ is the minimal ITS for distance optimal navigation using a gap sensor.  
\end{corollary}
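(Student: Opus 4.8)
$S_{GNT}= (\ifs_{tree}, Y, \phi_{GNT},\is_0)$ is the minimal ITS for distance optimal navigation using a gap sensor.

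I need to prove that the GNT as an ITS is not just *an* ITS that supports the set of optimal policies $\{\pi^\upsilon_{hist}\}_{\upsilon\in\V}$, but the *minimal* such ITS. From the preceding proposition, we know $S_{GNT}$ supports all $\pi^\upsilon_{hist}$, and from Theorem~\ref{thm:nec_suff_mult_pols}, this means $S_{GNT}$ is a quotient of $(Y^{<\Nat}, Y, \phi_\histY, ())$ by some sufficient I-map $\imap$ with $\imap \succeq \pi^\V$ (the join of the $\pi_\upsilon$).

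The plan: By Theorem~\ref{thm:nec_suff_mult_pols} and the uniqueness corollary (which rests on Theorem~4.19 of \cite{WeiSakLav22}, applicable since $S_{hist}\restriction_{\pi^\V_{hist}}$ is full by Lemma~\ref{lem:fullness_restriction}), the minimal ITS supporting all $\pi^\upsilon_{hist}$ is the quotient of $(Y^{<\Nat}, Y, \phi_\histY, ())$ by $\bar{\pi}^\V$, the unique minimal sufficient refinement of $\pi^\V$. So it suffices to show that the I-map $\imap$ realizing $S_{GNT}$ as such a quotient is exactly this minimal sufficient refinement $\bar\pi^\V$, i.e.~that $\imap$ distinguishes no more observation histories than is forced by sufficiency and the refinement constraint $\imap\succeq\pi^\V$.

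Let me outline the steps. First, I recall from the proof of the proposition that $\imap$ maps each observation history to the tree obtained by recursively applying $\phi_{GNT}$. Second, I argue $\imap \succeq \pi^\V$ is tight in the following sense: two observation histories $\histy,\histy'$ get mapped to the same GNT state precisely when they encode the same multiset of "surviving" gaps/subtrees after the split–merge operations, and this is exactly the coarsest distinction that (i) respects $\pi^\V$ — since, as shown in \cite{TovCohLav08}, following a gap causes it to disappear upon reaching a vertex, so the optimal action at a history is a function of which gap points toward $x_G$, which is determined by the GNT structure — and (ii) is sufficient — since the effect of the next observation $y$ on the GNT depends only on the current GNT (critical events are split/merge rules applied to the tree), not on the history that produced it. Third, I conclude that any strictly coarser I-map $\imap'$ would either fail sufficiency (collapse two trees whose successors under some $y$ differ) or fail $\imap'\succeq\pi^\V$ (collapse two trees that prescribe different optimal actions toward some vertex $\upsilon$), so $\imap$ is the minimal sufficient refinement; by uniqueness, $S_{GNT}$ is the minimal ITS.

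The main obstacle is the tightness argument in the second step — specifically, showing that the GNT's state space is no finer than forced. This requires arguing that any distinction the GNT makes between two trees is "witnessed" either by a future critical event producing different trees (sufficiency) or by a vertex $\upsilon\in\V$ toward which the two trees route the first step differently (refinement of $\pi^\V$). The cleanest way is to observe that the GNT is, by construction \cite{tovar2005gap,TovCohLav08}, precisely the portion of the shortest-path graph rooted at the current position that is discoverable from the gap-sensor history; two histories yield the same rooted shortest-path-graph fragment iff they are indistinguishable by all optimal navigation queries, which is exactly the join condition combined with sufficiency. I would state this as the crux and defer the combinatorial details of the split/merge dynamics to \cite{TovCohLav08}. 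Since the statement is a corollary, a sketch-level proof invoking Theorem~\ref{thm:nec_suff_mult_pols}, its uniqueness corollary, and the structural properties of GNTs established in \cite{TovCohLav08} is appropriate.
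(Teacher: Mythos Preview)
Your proposal is correct and follows essentially the route the paper intends. The paper gives no explicit proof of this corollary: it is meant to fall out of the preceding proposition's sketch proof, which concludes that the I-map $\imap$ realizing $S_{GNT}$ ``distinguishes the same observation histories as $\pi^\V$,'' together with the general machinery (Theorem~\ref{thm:nec_suff_mult_pols}, the uniqueness of the minimal sufficient refinement via fullness). Your write-up is a faithful and more careful elaboration of exactly this implicit argument; in particular, you correctly isolate the one nontrivial point---that $\imap$ is no finer than forced by sufficiency and by $\imap\succeq\pi^\V$---and appropriately defer the combinatorial verification to \cite{TovCohLav08}.
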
 
\vspace{-1em}

    



\section{Discussion}\label{sec:discussion}

We considered solving active tasks, which requires determining an ITS and a respective policy. To this end, we have analyzed ITS and policy pairs, fixing either the particular sensor or the class of ITS characterized by a sensor mapping. For both cases, we have established the necessary and sufficient conditions for such structures to satisfy for the respective ITS to support a feasible policy. We have then applied these results to analyzing minimally sufficient structures for distance optimal navigation in the plane. 

We expect the results 
to open up new avenues for research. In particular, the conditions for an ITS to satisfy is established in conjunction with a particular policy determined over histories. It is an interesting direction to characterize all such policies which in turn, would result in a characterization of all pairs of ITSs and respective policies. Defining an ordering (total or partial) over these 
pairs would allow selecting good ones, given different design objectives.

In Section~\ref{sec:GNT}, the selected actions corresponded to motion primitives. Considering the particular motion primitives, which were defined as a function of polygon vertices, a reactive sensor can be defined which establishes equivalence classes such that states at which the same direction is applied towards the same vertex belong to the same equivalence class. 
Suppose we had defined motion primitives as time (distance) parameterized functions such that each $u_i : [0, \tau_i] \rightarrow \bar{\theta}_i \in U$. Then, the resulting reactive sensor would need to distinguish also the points according to their distance to a particular vertex. This shows that there is an interesting trade-off when it comes to determining motion primitives. 

In this work, we assumed that a task-induced labeling exists based on which feasibility of policies can be checked. 
However, its existence may not be guaranteed, especially if the tasks are described over the external system states. In particular, it depends on the selection of sensors and/or motion primitives. Related to the definability of the tasks over histories, another interesting direction to follow is to establish a measure for the descriptive complexity of tasks, allowing further trade-offs for sensors and motion primitives.

\bibliographystyle{splncs04}
\bibliography{Bibliography}

\begin{thebibliography}{10}
\providecommand{\url}[1]{\texttt{#1}}
\providecommand{\urlprefix}{URL }
\providecommand{\doi}[1]{https://doi.org/#1}

\bibitem{AghChaAma14}
Agha-Mohammadi, A.A., Chakravorty, S., Amato, N.M.: {F}{I}{R}{M}:
  Sampling-based feedback motion-planning under motion uncertainty and
  imperfect measurements. The International Journal of Robotics Research
  \textbf{33}(2),  268--304 (2014)

\bibitem{basar1998gametheory}
Başar, T., Olsder, G.J.: Dynamic Noncooperative Game Theory, 2nd Edition.
  Society for Industrial and Applied Mathematics (1998).
  \doi{10.1137/1.9781611971132},
  \url{https://epubs.siam.org/doi/abs/10.1137/1.9781611971132}

\bibitem{belta2007symbolic}
Belta, C., Bicchi, A., Egerstedt, M., Frazzoli, E., Klavins, E., Pappas, G.J.:
  Symbolic planning and control of robot motion [grand challenges of robotics].
  IEEE Robotics \& Automation Magazine  \textbf{14}(1),  61--70 (2007)

\bibitem{BluKoz78}
Blum, M., Kozen, D.: On the power of the compass (or, why mazes are easier to
  search than graphs). In: Proceedings Annual Symposium on Foundations of
  Computer Science. pp. 132--142 (1978)

\bibitem{ConAsh70}
Conant, R.C., Ashby, W.R.: Every good regulator of a system must be a model of
  that system. International Journal of Systems Science  \textbf{1}(2),
  89–97 (1970)

\bibitem{ErdMas88}
Erdmann, M.A., Mason, M.T.: An exploration of sensorless manipulation
  \textbf{4}(4),  369--379 (Aug 1988)

\bibitem{erdmann1995understanding}
Erdmann, M.: Understanding action and sensing by designing action-based
  sensors. The International journal of robotics research  \textbf{14}(5),
  483--509 (1995)

\bibitem{FaiGirKrePap09}
Fainekos, G.E., Girard, A., Kress-Gazit, H., Pappas, G.J.: Temporal logic
  motion planning for dynamic mobile robots. Automatica  \textbf{45}(2),
  343--352 (2009)

\bibitem{hershberger1993efficient}
Hershberger, J., Suri, S.: Efficient computation of euclidean shortest paths in
  the plane. In: Proceedings of 1993 IEEE 34th Annual Foundations of Computer
  Science. pp. 508--517. IEEE (1993)

\bibitem{KaeLitCas98}
Kaelbling, L.P., Littman, M.L., Cassandra, A.R.: Planning and acting in
  partially observable stochastic domains  \textbf{101} (1998)

\bibitem{Lav06}
LaValle, S.M.: Planning Algorithms. Cambridge University Press, Cambridge, U.K.
  (2006), also available at http://lavalle.pl/planning/

\bibitem{Lav12b}
LaValle, S.M.: Sensing and Filtering: {A} Fresh Perspective Based on Preimages
  and Information Spaces, Foundations and Trends in Robotics Series, vol.~1:4.
  Now Publishers, Delft, The Netherlands (2012)

\bibitem{MajTed17}
Majumdar, A., Tedrake, R.: Funnel libraries for real-time robust feedback
  motion planning. The International Journal of Robotics Research
  \textbf{36}(8),  947--982 (2017)

\bibitem{mcfassel2023reactivity}
McFassel, G., Shell, D.A.: Reactivity and statefulness: Action-based sensors,
  plans, and necessary state. The International Journal of Robotics Research
  \textbf{42}(6),  385--411 (2023)

\bibitem{RosPinPaqCha08}
Ross, S., Pineau, J., Paquet, S., Chaib-Draa, B.: Online planning algorithms
  for {P}{O}{M}{D}{P}s. Journal of Artificial Intelligence Research
  \textbf{32},  663--704 (2008)

\bibitem{SabGhaSheOka19}
Saberifar, F.Z., Ghasemlou, S., Shell, D.A., O’Kane, J.M.: Toward a
  language-theoretic foundation for planning and filtering. The International
  Journal of Robotics Research  \textbf{38}(2-3),  236--259 (2019).
  \doi{10.1177/0278364918801503},
  \url{https://doi.org/10.1177/0278364918801503}

\bibitem{sakcak2023mathematical}
Sakcak, B., Timperi, K.G., Weinstein, V., LaValle, S.M.: A mathematical
  characterization of minimally sufficient robot brains. The International
  Journal of Robotics Research p. 02783649231198898 (2023)

\bibitem{SakWeiLav22}
Sakcak, B., Weinstein, V., LaValle, S.M.: The limits of learning and planning:
  Minimal sufficient information transition systems. In: 2022 International
  Workshop on the Algorithmic Foundations of Robotics (WAFR) (2022)

\bibitem{subramanian2022approximate}
Subramanian, J., Sinha, A., Seraj, R., Mahajan, A.: Approximate information
  state for approximate planning and reinforcement learning in partially
  observed systems. Journal of Machine Learning Research  \textbf{23}(12),
  1--83 (2022)

\bibitem{TovCohLav08}
Tovar, B., Cohen, F., LaValle, S.M.: Sensor beams, obstacles, and possible
  paths. In: Proceedings Workshop on Algorithmic Foundations of Robotics (WAFR)
  (2008)

\bibitem{TovMurLav07}
Tovar, B., Murrieta-Cid, R., LaValle, S.M.: Distance-optimal navigation in an
  unknown environment without sensing distances. IEEE Transactions on Robotics
  \textbf{23}(3),  506--518 (Jun 2007)

\bibitem{tovar2005gap}
Tovar, B., Guilamo, L., LaValle, S.M.: Gap navigation trees: Minimal
  representation for visibility-based tasks. In: Algorithmic Foundations of
  Robotics VI, pp. 425--440. Springer (2005)

\bibitem{VitTom11}
Vitus, M.P., Tomlin, C.J.: Closed-loop belief space planning for linear,
  gaussian systems. In: 2011 IEEE International Conference on Robotics and
  Automation. pp. 2152--2159. IEEE (2011)

\bibitem{WeiSakLav22}
Weinstein, V., Sakcak, B., LaValle, S.M.: An enactivist-inspired mathematical
  model of cognition. Frontiers in Neurorobotics  \textbf{16} (2022).
  \doi{10.3389/fnbot.2022.846982},
  \url{https://www.frontiersin.org/articles/10.3389/fnbot.2022.846982}

\bibitem{ZhaShe20}
Zhang, Y., Shell, D.A.: Abstractions for computing all robotic sensors that
  suffice to solve a planning problem. In: 2020 IEEE International Conference
  on Robotics and Automation (ICRA). pp. 8469--8475 (2020).
  \doi{10.1109/ICRA40945.2020.9196812}

\bibitem{ZhuAlo19}
Zhu, H., Alonso-Mora, J.: Chance-constrained collision avoidance for {M}{A}{V}s
  in dynamic environments. IEEE Robotics and Automation Letters  \textbf{4}(2),
   776--783 (2019). \doi{10.1109/LRA.2019.2893494}

\end{thebibliography}

\appendix

\end{document}